\newlength\aftertitskip     \newlength\beforetitskip
\newlength\interauthorskip  \newlength\aftermaketitskip
\def\maketitle{\par
 \begingroup
   \def\thefootnote{\fnsymbol{footnote}}
   \def\@makefnmark{\hbox to 4pt{$^{\@thefnmark}$\hss}}
   \@maketitle \@thanks
 \endgroup
\setcounter{footnote}{0}
 \let\maketitle\relax \let\@maketitle\relax
 \gdef\@thanks{}\gdef\@author{}\gdef\@title{}\let\thanks\relax}
\def\@startauthor{\noindent \normalsize\bf}
\def\@endauthor{}
\def\@starteditor{\noindent \small {\bf Editor:~}}
\def\@endeditor{\normalsize}
\def\@maketitle{\vbox{\hsize\textwidth
 \linewidth\hsize \vskip \beforetitskip
 {\begin{center} \LARGE\@title \par \end{center}} \vskip \aftertitskip
 {\def\and{\unskip\enspace{\rm and}\enspace}%
  \def\addr{\small\it}%
  \def\email{\hfill\small\tt}%
  \def\name{\normalsize\bf}%
  \def\AND{\@endauthor\rm\hss \vskip \interauthorskip \@startauthor}
  \@startauthor \@author \@endauthor}
}}
\newcommand{\Mc}{\mathcal{M}}
\newcommand{\norm}[1]{\|{#1}\|}
\newcommand{\curvpool}{\textsc{ORC-Pool}}
\newcommand{\mincutpool}{\textsc{MinCutPool}\xspace}
\newcommand{\diffpool}{\textsc{DiffPool}\xspace}
\newtheorem{defn}{Definition}
\newtheorem{lem}{Lemma}
\newtheorem{cor}{Corollary}
\newtheorem{thm}{Theorem}
\title{Graph Pooling via Ricci Flow}
\author{\name Amy Feng \email{afeng@college.harvard.edu}\\
\name Melanie Weber \email{mweber@seas.harvard.edu}\\
  \addr{Harvard University}
}
\begin{document}
\maketitle

\begin{abstract}
  Graph Machine Learning often involves the clustering of nodes based on similarity structure encoded in the graph's topology and the nodes' attributes. On homophilous graphs, the integration of pooling layers has been shown to enhance the performance of Graph Neural Networks by accounting for inherent multi-scale structure. Here, similar nodes are grouped together to coarsen the graph and reduce the input size in subsequent layers in deeper architectures. In both settings, the underlying clustering approach can be implemented via graph pooling operators, which often rely on classical tools from Graph Theory. In this work, we introduce a graph pooling operator (\curvpool), which utilizes a characterization of the graph's geometry via Ollivier's discrete Ricci curvature and an associated geometric flow. Previous Ricci flow based clustering approaches have shown great promise across several domains, but are by construction unable to account for similarity structure encoded in the node attributes. However, in many ML applications, such information is vital for downstream tasks. \curvpool~extends such clustering approaches to attributed graphs, allowing for the integration of geometric coarsening into Graph Neural Networks as a pooling layer. 
\end{abstract}

\section{Introduction}
\label{sec:intro}
Multi-scale structure is ubiquitous in relational data across domains; examples include complex molecules in computational biology, systems of interacting particles in physics, as well as complex financial and social systems. Many graph learning tasks, such as clustering and coarsening, rely on such structure. Graph Neural Networks (GNNs) account for multi-scale structure via pooling layers, which form crucial building blocks in many state of the art architectures.

\begin{wrapfigure}{R}{0.25\linewidth}
\centering
    \includegraphics[width=3cm]{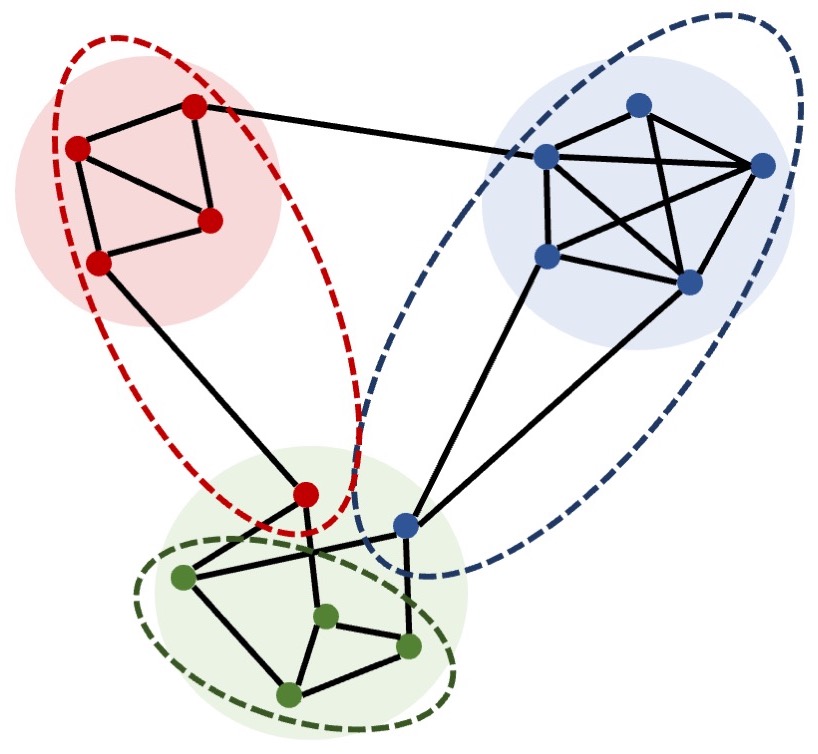}
    \caption{Clustering based on connectivity vs. node attributes only.}
    \label{fig:intro}
\end{wrapfigure}
Node clustering often utilizes an implicit or explicit pooling operation, which decomposes a given graph into densely connected subgraphs by grouping similar nodes. Applied at different scales, aggregating information across subgraphs allows for uncovering multi-scale structure (coarsening). 
A number of classical algorithms implement such operations, including spectral clustering~\cite{fiedler1973algebraic,spielman1996spectral}, the Louvain algorithm~\cite{blondel2008fast} and Graclus~\cite{dhillon2007weighted}. Recently, clustering approaches based on graph curvature~\cite{ni2019community,sia2019ollivier,weber2018detecting,tian2023curvature,fesser2023augmentations} have received interest. They utilize discrete Ricci curvature~\cite{Ol,forman} to characterize the local geometry of a graph by aggregating geometric information across 2-hop node neighborhoods. An associated curvature flow~\cite{Ol2} characterizes graph geometry at a more global scale, uncovering its coarse geometry.

A crucial shortcoming of classical node clustering approaches in the case of \emph{attributed graphs} is their inability to capture structural information encoded in the node attributes, as they evaluate only similarity structure encoded in the graph's topology. 
However, such information is often vital for
downstream tasks. Applying a separate clustering approach to the (typically vector-valued) node attributes does not resolve this issue, as it is blind to similarity structure encoded in the graph's topology, which is often equally vital for downstream tasks (Fig.~\ref{fig:intro}). Hence, to capture meaningful clusters in attributed graphs, pooling operators need to evaluate \emph{both} graph topology and node attributes.
Recent literature has introduced a plethora of pooling operators for attributed graphs~\cite{bianchi2020hierarchical,ying,nmf}, many of which are based on the classical algorithms listed above. In this work, we extend, for the first time, clustering approaches based on Ricci flow to attributed graphs.

\vspace{-5pt}
\paragraph{Related Work.}
The design of pooling layers has historically build on clustering algorithms, such as \textsc{Graclus}~\cite{dhillon2007weighted} or \textsc{MinCutPool}~\cite{mincut}. 
A plethora of pooling operators has been proposed in recent literature, inspired by spectral clustering~\cite{bianchi2020hierarchical,TVGNN}, matrix factorization~\cite{nmf}, hierarchical clustering~\cite{ying}, modularity~\cite{muller2023graph} and multi-set encoding~\cite{baek2021accurate}, among others. The SRC framework~\cite{grattarola_understanding_2021} serves as a unifying framework and benchmark for pooling operators. 
Most closely related to our work is a pooling approach proposed by~\cite{sanders}, which performs edge cuts guided by Ricci curvature, instead of a curvature-adjustment computed via Ricci flow. Among other differences, their approach does not incorporate node attributes, which can lead to reduced performance on attributed graphs. We provide a detailed comparison of the two approaches (conceptual and experimental) in Apx.~\ref{apx:sanders}.

\vspace{-5pt}
\paragraph{Summary of contributions.}
The main contributions of the paper are as follows:
\begin{enumerate}
\setlength{\itemsep}{1pt}
    \item We introduce \curvpool, a trainable  \emph{pooling operator}, which utilizes discrete Ricci curvature and an associated geometric flow to identify salient multi-scale structure in graphs. \curvpool~groups nodes according to similarity structure encoded \emph{both} in the graph's topology, as well as its nodes' attributes, presenting the first extension of Ricci flow based clustering to attributed graphs.
    \item We further introduce a pooling layer, which allows for incorporating \curvpool~into Message-Passing Graph Neural Networks.
    \item We perform a range of computational experiments, which demonstrate the utility of \curvpool~layers through improvements in node- and graph-level tasks. We complement our empirical results with a discussion of the structural properties of \curvpool.
\end{enumerate}

\section{Background and Notation}
Let $G=\{V,E\}$ denote a graph, $V$ ($\vert V \vert =N$) the set of nodes and $E= V \times V$ the set of edges; $G$ is endowed with the usual shortest-path metric $d$. We assume that $G$ is attributed and denote node attributes as $X \in \mathbb{R}^{\vert V \vert \times m}$. We further assume that $G$ is \emph{weighted}; i.e., its edges are endowed with scalar attributes, given by a weight function $w: E \rightarrow \mathbb{R}_{+}$. Below, we recall basic Graph Neural Networks concepts and introduce the graph curvature notions and associated curvature flow utilized in this work.

 \subsection{Graph Neural Networks with Pooling}
 \label{sec:gnn}

\paragraph{Message-passing Graph Neural Networks.}
The blueprint of many state of the art architectures are Message-Passing GNNs (MPGNNs)~\cite{gori2005new,hamilton2017inductive}, which learn node embeddings via an iterative ``message-passing'' (MP) scheme: Node representations are iteratively updated as a function of their neighbors' representations. 
Node attributes in the input graph determine the node representations at initialization.  
Formally, MPGNNs consist of a \emph{message} $m_v^{(l+1)}$ and an update function $f_{Up}$ ($l=0,\dots,L-1$ denoting the layer):
\begin{align*}
    m_v^{(l+1)} &=f_{Agg}^{(l)} \big( h_v^{(l)}, \{ h_u^{(l)} \; \vert \; u \in \mathcal{N}_v \}\big) \qquad \\
    h_v^{(l+1)} &=f_{Up} \big( h_v^{(l)},m_v^{(l+1)}   \big) \; .
\end{align*}
$f_{Agg}, f_{Up}$ may be implemented via MLPs with trainable parameters. 
Popular examples of MPGNNs are GCN~\cite{kipf2016semi}, GIN~\cite{xu2018powerful}, GraphSage~\cite{hamilton2017inductive} and GAT~\cite{velivckovic2017graph}.

\vspace{-5pt}
\paragraph{Graph pooling operators.}
State of the art GNN architectures often combine MP base layers with pooling layers. 
The \emph{SRC framework}~\cite{grattarola_understanding_2021} formalizes \emph{pooling operators} as maps $\textsc{Pool:} \; G \mapsto G^P=(X^P,E^P)$ composed of three functions that act on the nodes, node attributes and edges of $G$, respectively (Fig.~\ref{fig:pool}):
\begin{enumerate}
    \item A \emph{selection} function, which identifies a set of nodes that are merged to supernodes ($\mathcal{V}_k \subseteq \lbrace 1, \dots, N \rbrace$): $\textsc{Sel}: \lbrace 1, \dots, N \rbrace \mapsto (\mathcal{V}_1, \dots, \mathcal{V}_K)$.
    The selection function lies at the heart of the pooling operator. 
    Below, we propose a \emph{geometric selection function}, which utilizes the graph's geometry for node-level clustering. 
    \item A \emph{reduction} function, which computes the attributes of supernodes $\mathcal{V}_k$ by aggregating the attributes of its nodes: $\textsc{Red}: \lbrace (x_{k_1}, x_{k_2}, \dots )\rbrace_{k \in [K]} \mapsto \lbrace x_k' \rbrace_{k \in [K]}$.
    \item A \emph{connection} function, which determines the connectivity of supernodes and (re-)assigns edge attributes: $\textsc{Con}: ((\mathcal{V}_1, \dots,\mathcal{V}_K),E) \mapsto \lbrace(\mathcal{V}_k,\mathcal{V}_l),e_{kl}') \rbrace_{k,l \in [K]}$.
\end{enumerate}
SRC is a unifying framework for commonly used pooling operators and has been utilized in GraphML libraries such as \textsc{Spektral}\cite{spketral} and \textsc{PyTorch Geometric}\cite{pytorch-geom}.

\subsection{Graph Curvature}
\label{sec:curv}
Classically, Ricci curvature establishes a connection between the local dispersion of geodesics and the local curvature of a manifold, deriving from a crucial connection between volume growth rates and curvature: Negative curvature characterizes exponential fast volume growth, while positive curvature indicates contraction. 
Several analogous notions have been proposed for discrete spaces, including by Ollivier~\cite{Ol}, Forman~\cite{forman} and Erbar-Maas~\cite{erbar_ricci_2012}. Common among all notions is the observation that edges, which form long-range connection, have low (negative) curvature, while edges that form local connections have high curvature, allowing for a characterization of local and global connectivity patters in a graph. Here, we focus on Ollivier's curvature, which we introduce below.

\subsubsection{Ollivier's Ricci curvature}
Ollivier~\cite{Ol} introduces a Ricci curvature, which relates the curvature along a geodesic between nearby points $x,y$ on a manifold $\Mc$ with the transportation distance between their neighborhoods. Recall that the Wasserstein-1 distance between two probability distributions $\mu_1, \mu_2$ is given by
\begin{equation}
\label{eq:W-dist}
    W_1(\mu_1, \mu_2) = \inf_{\mu \in \Gamma(\mu_1,\mu_2)} \int d_\Mc(x,y) \mu(x,y) \; dx \; dy \; ,
\end{equation}
where $d_{\Mc}(x,y)$ denotes the geodesic distance, $B_\Mc(x,\epsilon):=\{z \in \Mc: \; d_\Mc(x,z) \leq \epsilon \}$ the $\epsilon$-neighborhood of $x$ and $\Gamma(\mu_1,\mu_2)$ the set of measures with marginals $\mu_1,\mu_2$. Further let $\mu_x^\Mc(z)$ denote a measure on this neighborhood. Then \emph{Ollivier's Ricci curvature} (ORC) is given by
$\kappa_\Mc (x,y) = 1 - \frac{W(\mu_x^\Mc, \mu_y^\Mc)}{d_{\Mc}(x,y)}$.
 Analogous to the continuous case, ORC can be defined in discrete spaces: Let $B$ denote a matrix with entries $b_{ij}=e^{-w_{ij}}$ inverse proportional to the edge weights and $D_B$ denote a matrix with entries $d_{ii}=\sum_j b_{ij}$. Consider a diffusion process on the graph $G$, i.e., we place a point mass $\delta_v$ at a node $v$ and consider the diffusion
 $p_v(t)= \delta_v e^{-tD_B^{-1}(D_B-B)}$~\cite{chung}.
 The distribution generated by $p_v(t)$ defines a measure on the $t$-neighborhood of $v$. We define a discrete analog of ORC~\cite{Ol,gosztolai2021unfolding} for an edge $e=(u,v)$ via the transportation distances between neighborhood measures defined by diffusion processes starting at its adjacent vertices, i.e.,
 \begin{equation}\label{eq:ORC-dyn}
     \kappa_{uv}^t = 1 - \frac{W_1(p_v(t),p_u(t))}{w_{uv}} \; .
 \end{equation}
 How does this quantity relate to the coarse structure of the graph? In most graphs, similar nodes form densely connected subgraphs (\emph{homophily principle}). As a result, if $u,v$ are similar, then the diffusion processes will stay nearby with high probability, exploring the densely connected subgraph. In contrast, if $u,v$ are dissimilar, e.g., belong to separate clusters, then the diffusion processes are likely to explore separate clusters, drawing apart quickly. The transportation distance $W_1(p_v(t),p_u(t))$ is much higher in the second case than in the first. Thus, edges that connect dissimilar nodes have low (usually negative) curvature, whereas edges that connect similar nodes have high curvature. Hence, bridges between communities may be identified via low curvature (see Fig.~\ref{fig:orc}).
  To ensure computational feasibility, we restrict ourselves to a first-order approximation of the diffusion process, $p_v \approx \alpha I + (1-\alpha) \delta_v D_B^{-1} B$, which defines a measure on the 1-hop node neighborhoods.
 With this choice, we recover the classical discrete Ollivier-Ricci curvature~\cite{Ol,lin-yau}. We note that the pooling operations introduced below naturally extend to curvature computed over $t$-hop neighborhoods. Varying the neighborhood radius could provide an additional avenue for incorporating multi-scale structure into the curvature computation, which could be valuable in practise, albeit at a higher computational cost.
 Notice that the curvature notion introduced above does not account for node attributes. 
 We may define edge weights that encode a similarity measure on the node attributes, e.g., $w_{ij}=\frac{1}{m+1} \sum_k \boldsymbol{1}_{\{ x_i^k \neq x_j^k \}}$).
\begin{figure*}[t]
\centering
\includegraphics[scale=0.14]{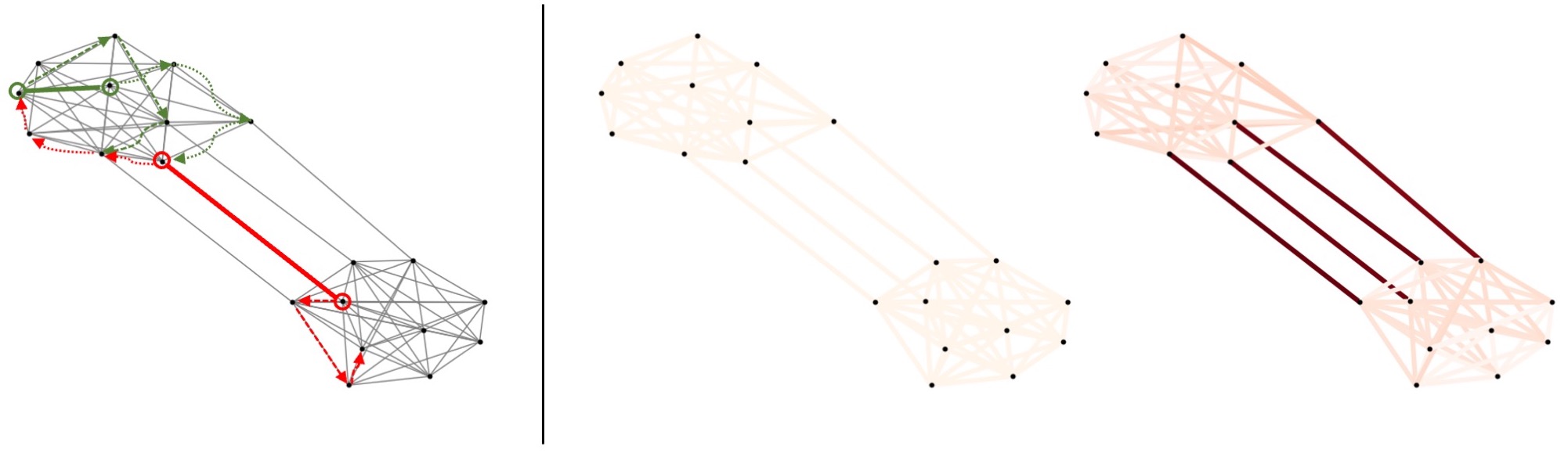}
\caption{Discrete Ricci curvature reveals coarse structure. \textbf{Left:} Relation between curvature and trajectories of diffusion processes starting at similar (green) and dissimilar (red) nodes. \textbf{Right:} Dumbbell graph with uniform edge weights (at initialization) and with curvature-adjusted edge weights (darker colors represent lower curvature). }
\vspace{-10pt}
\label{fig:orc}
\end{figure*}
%

\subsubsection{Curvature-based Clustering}
\paragraph{Ricci Flow.}
In continuous spaces, an associated geometric flow (\emph{Ricci flow}) is of great importance, as it reveals deep connections between the geometry and topology of the manifold (geometrization conjecture). As a loose analogy, a geometric flow associated with discrete Ricci curvature can be defined, which has been related to the community structure~\cite{ni2019community} and coarse geometry~\cite{WSJ2} of graphs.   
Ollivier~\cite{Ol2} proposes a curvature flow
\vspace{-5pt}
\begin{equation*}
    \frac{d}{d t} d_G(u,v)(t) = -\kappa_{uv}(t) \cdot d_G(u,v)(t) \quad ((u, v) \in E)\; ,
\end{equation*}
associated with discrete ORC along edges $(u,v)$. A discretization of this flow gives a combinatorial evolution equation, which evolves edge weights according to the local geometry of the graph:
Consider a family of weighted graphs $G^t = \{V, E, W^t \}$, which is constructed from an input graph $G=G^0$ by evolving its edge weights as (setting $dt=1$)
\vspace{-2pt}
\begin{equation}\label{eq:ricci-flow}
    w^t_{u,v} \gets (1-\kappa_{uv})d_G(u,v) \qquad ((u, v) \in E)\; ,
\end{equation}
where $\kappa_{uv}$, $d_G(u,v)$ are computed on $G^t$. Eq.~\ref{eq:ricci-flow} may be viewed as a discrete analogue of continuous Ricci flow. 
To control the scale of the edge weights, we normalize after each iteration.

\vspace{-5pt}
 \paragraph{Geometric clustering algorithms.}
 Geometric clustering approaches based on ORC for non-attributed graphs have previously been considered, e.g.~\cite{sia2019ollivier,ni2019community,tian2023curvature}. These algorithms are partition-based, i.e., remove edges to decompose the graph into subgraphs, and come in two flavours: The first takes a local perspective, cutting edges with ORC below a given threshold (e.g.,~\cite{sia2019ollivier}). The second 
 evolves edge weights under Ricci flow (Eq.~\ref{eq:ricci-flow}), encoding local and more global geometric information into the edge weights, before removing edges with weight below a certain threshold (e.g.,~\cite{ni2019community}).
We emphasize that curvature, by construction, evaluates only the graph's connectivity, but cannot account for node attributes (apart from  carefully chosen initialization of edge weights, see above). 

 \subsubsection{Curvature Approximation}
 \label{sec:comp-consid}
Computing ORC as defined above involves solving an optimal transport problem (i.e., computing the Wasserstein-1 distances between measures on the 1-hop neighborhoods of the adjacent vertices). In the discrete setting, this corresponds to computing the earth mover’s distance, which, in the worst case, has  complexity $O(\vert E \vert m^3)$ (where $m$ denotes the maximum degree of nodes in $G$). A commonly used approximation via Sinkhorn’s algorithm has a reduced complexity of $O(\vert E \vert m^2)$, which can still be prohibitively expensive on large-scale graphs. Recently, Tian et al.~\cite{tian2023curvature} introduced a combinatorial approximation of ORC of the form
$\widehat{\kappa}_{uv} := \frac{1}{2} \left(\kappa_{uv}^{up} + \kappa_{uv}^{low}\right)$,
where $\kappa^{up}$ and $\kappa^{low}$ are combinatorial upper and lower bounds, which can be computed in $O(\vert E \vert m)$. 
More details, as well as the formal statement of the bounds can be found in Apx.~\ref{apx:curv-approx}. In large or dense graphs, implementing the above introduced Ricci flow via this approximation can significantly reduce runtime~\cite{tian2023curvature}.

\section{Geometric Coarsening and Pooling in attributed graphs}
Characterizing the coarse geometry of a graph is invaluable for graph learning tasks. In this paper, we introduce a geometric pooling operator (\curvpool, Fig.~\ref{fig:pool}), which may serve as a standalone graph coarsening approach, as well as a pooling layer that can be intergrated into GNN architectures.
\curvpool~allows for evaluating similarity structure encoded in the graph's geometry via ORC flow, while also accounting for similarity structure encoded in the node attributes. We follow the SRC framework~\cite{grattarola_understanding_2021} discussed above to formalize our approach.

 \subsection{Geometric Graph Coarsening}
 \paragraph{Geometric selection function.}
To define a pooling operator that effectively coarsens a graph, the selection function needs to identify a cluster assignment that preserves the overall graph topology (long-range connections, community structure, etc.). We propose a \emph{geometric selection function}, which computes the cluster assignment guided by the relation between curvature and graph topology (sec.~\ref{sec:curv}). We encode curvature information into a matrix $C_T=(c_{ij})$ with entries $c_{ij}=w_{ij}$ given by the evolved edge weights after $T$ iterations of ORC flow. The matrix $C$ can be seen as a \emph{curvature-adjusted} adjacency matrix, which assigns an \emph{importance score} to each edge reflecting its structural role: Under ORC flow (Eq.~\ref{eq:ricci-flow}), structurally important long-range connections are assigned a high score, whereas edges between similar nodes receive a low score. This encodes global similarity structure, uncovering the coarse geometry of the graph (see discussion above and Fig.~\ref{fig:orc}). 
Edges whose weight decreases under ORC flow contract (positive curvature), moving the adjacent, similar nodes closer together. On the other hand, edges whose weight increases under ORC flow expand (negative curvature), moving the adjacent dissimilar nodes further apart. This naturally induces a coarsening of the graph.
Curvature-based coarsening utilizes this emerging structure by cutting long-range connections with a high weight ($w_{ij}^T> \Delta$) and then merging the remaining connected components into supernodes. The threshold $\Delta$ is a crucial hyperparameter, which is often difficult to choose in practise. 
Ni et al.~\cite{ni2019community} learn the threshold by optimizing the modularity of the resulting decomposition. Modularity optimization is NP-hard and hence an exact solution is intractable. They approximate the solution via an expensive parameter search. Since this subroutine is both expensive and non-differentiable, it is not suitable for integration into a trainable architecture. Hence, we need to employ a different auxiliary loss for identifying edges to cut, which we now describe.
\begin{figure*}[t]
\centering
\includegraphics[scale=0.12]{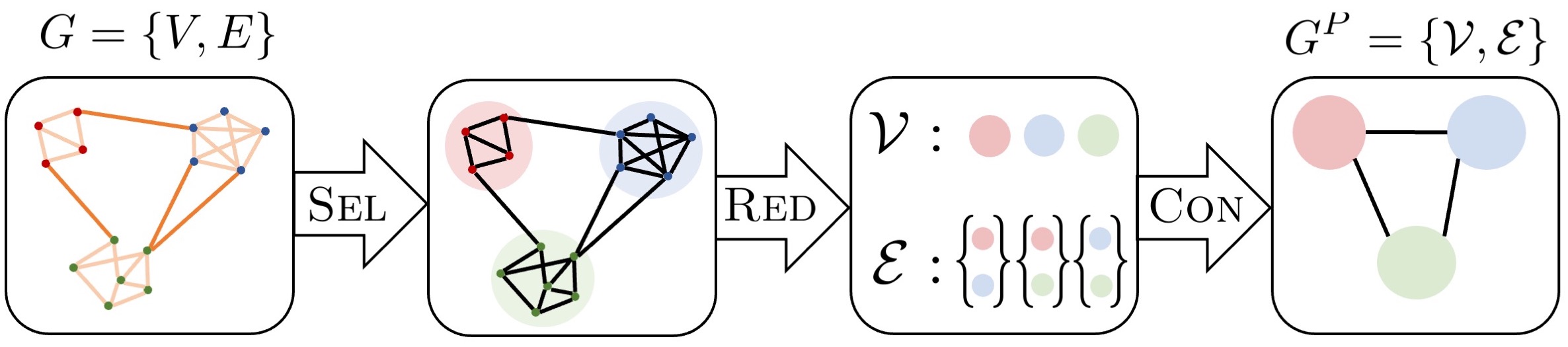}
\caption{Proposed geometric pooling operator (\curvpool),  which utilizes a curvature-based, geometric selection function (\textsc{Sel}) to identify supernodes and superedges (\textsc{Red}), which are then reconnected to generate the pooled graph (\textsc{Con}).}
\label{fig:pool}
\end{figure*}

\vspace{-5pt}
\paragraph{Graph Cuts.}
Let $S \in \{ 0,1 \}^{N \times K}$ denote the assignment matrix computed by the selection function, its entries specify the assignment of $N$ vertices to $K$ supernodes, i.e., $s_{ik}=1$, if vertex $i$ is $\mathcal{V}_k$ and $s_{ik}=0$ otherwise. 
The problem of computing $S$ by removing edges relates to the \emph{mincut problem}, a classical problem in Combinatorics. 
In a seminal paper, Shi and Malik~\cite{shi2000normalized} show that the min-cut problem can be written as an optimization problem with respect to the number of edges within and between putative clusters (here, supernodes):
\begin{equation}\label{eq:mincut}
    \max \frac{1}{K} \sum_{k=1}^K \frac{\sum_{i,j \in V_k} \boldsymbol{1}_{i \sim j}}{\sum_{i \in V_k, j \in V \setminus V_k} \boldsymbol{1}_{i \sim j}} \; .
\end{equation}
In the Graph ML literature, an equivalent formulation of~\ref{eq:mincut} due to Dhillon et al.~\cite{dhillon2004kernel} has been widely adapted,
\begin{equation}\label{eq:mincut-2}
    \max \frac{1}{K} \sum_{k=1}^K \frac{S_k^T A S_k}{S_k^T D S_k} \qquad \; {\rm s.t.} \; S \boldsymbol{1}_K = \boldsymbol{1}_N \; ,
\end{equation}
where $A$ denotes the graph's (unweighted) adjacency matrix, $D={\rm diag}(A)$ its degree matrix and $S_k$ the $k$th row of $S$. While this problem itself is NP-hard, it can be efficiently solved via the relaxation~\cite{stella2003multiclass}
\begin{align}~\label{eq:mincut-relaxed}
    &Q^* :={\rm argmax}_{\substack{Q \in \mathbb{R}^{N \times K} \\ Q^T Q=I_K}} \frac{1}{K} \sum_{k=1}^K Q_k^T A Q_k \qquad
    {\rm s.t.} \; Q=D^{1/2} S (S^T D S)^{-1/2} \; . 
\end{align}
Using classical results from spectral graph theory, one can show that the optimizer takes the form $Q^*=U_K V$, where $U_K \in \mathbb{R}^{N \times K}$ is formed by the top $K$ eigenvectors of the normalized adjacency $\hat{A}=D^{-1/2} A D^{-1/2}$ and $V \in O(K)$ is an orthogonal transform. 
To recover the cluster assignment $S$, one can apply $k$-means clustering to the rows of $Q^*$~\cite{von2007tutorial}.
Approaches for efficiently computing minimal graph cuts have been adapted for the design of effective graph pooling operators~\cite{bianchi2020hierarchical,hansen2022clustering}.
We have argued above that computing a curvature-adjustment emphasizes the underlying community structure (e.g., Fig~\ref{fig:orc}), which should make it easier to learn graph cuts. Note that after curvature-adjustment, the graph is weighted, i.e., we now optimize the sum of edge weights within and between putative clusters. 
We will provide theoretical evidence for this observation in the next section.
Weighted graph cuts have been considered previously, e.g., in \textsc{metis}~\cite{dhillon2007weighted}, which underlies the graclus pooling layer.
We can adapt Eq.~\ref{eq:mincut-relaxed} to a loss function for computing a cluster assignment based on the (normalized) curvature-adjusted adjacency matrix, i.e.,
\begin{equation}
\label{eq:curvpool-loss}
    \min_S  \; \left[ \mathcal{L}_{{\rm CP}} = - \frac{{\rm tr} \;  S^T \hat{C}_T S}{ {\rm tr} \; S^T \hat{D} S}  \right] \; .
\end{equation}
 For $T=0$, we recover the classical minimum graph cut objective. Notably, the resulting objective is differentiable and can be optimized with standard techniques (see above), which is preferable over the parameter search for a suitable edge weight threshold performed in~\cite{sia2019ollivier,ni2019community}. 

\vspace{-5pt}
\paragraph{Geometric pooling operator (\curvpool).} The reduction and connection functions of \curvpool~closely resemble canonical choices that are common among many pooling operators 
(e.g.,~\textsc{MinCutPool}~\cite{mincut}, \diffpool~\cite{ying}, \textsc{NMF}~\cite{nmf}, among others): Given a selection $S \in \lbrace 0,1 \rbrace^{N \times K}$, which assigns $N$ nodes to $K$ supernodes, we set $\textsc{Red}(X) = S^T X =: X'$ and $\textsc{Con}(E) = S^T E S := E'$. Naturally, if $G$ is not attributed, then there is no need for a reduction function. In the coarsened graph, edge weights are again initialized either as 1 or according to differences in supernode attributes (implemented in the \textsc{Con} function).
This coarsening scheme may be applied multiple times, i.e., we can learn a sequence $S_1, S_2, \dots$ of selection matrices, which compute cluster assignments with varying degree of coarseness. With that, \curvpool~may serve as a standalone coarsening or clustering approach.

\subsection{\curvpool~layer for Graph Neural Networks}
While the geometric pooling operator described above applies to attributed graphs, its selection function does not evaluate similarity structure encoded in node attributes, aside from a specific edge weight initialization discussed earlier. In this section, we will describe an alternative coarsening approach, where \curvpool~is integrated into GNNs as a pooling layer grounded in graph geometry. Stacked on top of MP base layers, which learn node representation that reflect similarity structure in both graph topology and node attributes, \curvpool~integrates both types of structural information to coarsen the graph. 

Pooling operators can be integrated into GNNs by stacking pooling layers, which implement the operator, on top of blocks of base layers (usually MP layers). The base layer learns node embeddings $\tilde{X}={\rm GNN}(X,\hat{A},\theta)$, where $\theta$ are trainable parameters. If the input graph is attributed, its node attributes are used to initialize the base layer. 
We implement the \curvpool~layer as an MLP, where we
learn the cluster assignment matrix $S$ with an MLP 
with softmax activation (enforcing the constraint $S\boldsymbol{1}_k=\boldsymbol{1}_N$) and trainable parameters $\psi$ ($S=MLP(\tilde{X},\psi)$), which are trained using Eq.~\ref{eq:curvpool-loss} as an auxiliary loss.
In particular, we optimize the training objective
\begin{equation}
    \min \Big[ \mathcal{L}_{{\rm CP}}(\theta,\psi) = -\frac{{\rm tr} \;  S^T \hat{C}_T S}{{\rm tr} \;  S^T \hat{D} S} + \Big\| \frac{S^TS}{\norm{S^TS}_F} - \frac{I_K}{\sqrt{K}} \Big\|_F \Big] \; ,
\end{equation}
where the first term of the objective corresponds to the auxiliary loss and the second encourages mutually orthogonal clusters of similar size, preventing degenerate clusters ($\norm{\cdot}_F$ denotes the Frobenius norm). Notice that the training objective closely resembles that of \mincutpool~\cite{mincut}, but utilizes the curvature-adjusted adjacency matrix $C_T$. In particular, we recover \mincutpool~as a special case ($T=0$). 

We obtain the adjacency matrix of the coarsened graph by setting $A^P=S^T A S$ and recomputing curvature-adjusted weights via Ricci flow on the superedges. We initialize superedge weights as a measure of similarity of the supernode attributes, which are obtained by setting $X^P=S^T X$. 
Stacking blocks of base layers and pooling layers on top of each other yields a successive coarsening of the graph, which reflects its multi-scale structure. Notice that curvature-based importance scores influence the message functions on each scale.
The trainable parameters $\psi, \theta$ are trained end-to-end as part of the GNN architecture and the respective loss for the downstream task, allowing for a seamless integration of geometric pooling into GNN architectures. As a final remark, notice that the curvature-adjustment employed in \curvpool~is flexible and could be incorporated into other (dense) pooling operators too (especially those that build on graph cuts).

\section{Properties of~\curvpool}

\subsection{Basic properties}\label{sec:topo}
\paragraph{Permutation-invariance.}
Graphs and sets are permutation-invariant, as there is no natural order relation on the nodes. Standard MP layers are permutation-equivariant and pooling layers permutation-invariant to encode this structure as inductive bias. The curvature-adjustment added in \curvpool~preserves this property (for details see Apx.~\ref{apx:theory}).

\vspace{-5pt}
\paragraph{Impact on Expressivity.}
A classical measure of the representational power of GNNs is \emph{expressivity}, which evaluates a GNNs ability to distinguish pairs of non-isomorphic graphs. It is well-known that standard MPGNNs are as powerful as the Weiserfeiler-Lehmann test, a classical heuristic for graph isomorphism testing~\cite{xu2018powerful,morris2019weisfeiler}. To reap the benefits of pooling, pooling operators should preserve the expressivity of the MP base layer. Recently,~\cite{bianchi2023expressive} established conditions for this property. A simple corollary shows that \curvpool~fulfills these conditions (see Apx.~\ref{apx:theory} for details):
\begin{cor}
    Consider a simple architecture with a block of MP base layers, followed by a \curvpool~layer. 
    Let $G_1, G_2$ denote two 1-WL-distinguishable graphs with node attributes $X_1, X_2$. Further let $X_1' \neq X_2'$ denote the node representations learned by the block of MP layers. Then the coarsened 
    graphs $G_1^P, G_2^P$ learned by the 
    \curvpool~layer are 1-WL distinguishable.
\end{cor}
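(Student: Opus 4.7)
The plan is to invoke the expressivity-preservation theorem of~\cite{bianchi2023expressive}, which provides sufficient conditions on the three components of a pooling operator (selection, reduction, connection) under which any pair of graphs rendered distinct by the preceding MP block remains 1-WL-distinguishable after coarsening. The proof then reduces to verifying these conditions for \curvpool, with the curvature-adjustment being the only non-standard ingredient requiring attention.

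First, I would restate the three conditions: (a) the selection function is a deterministic function of the learned node representations; (b) the reduction function aggregates node features in a manner that is injective on multisets of vectors; and (c) the connection function is a deterministic function of the selection matrix and the input edge set. Each is straightforward for \curvpool: (a) holds because $S = \mathrm{MLP}(\tilde{X},\psi)$ with row-wise softmax is deterministic in $\tilde{X}$; (b) holds because $X^P = S^\top X$ is a weighted sum aggregation whose injectivity on multisets can be guaranteed by the standard Deep Sets / GIN-style construction already used to establish expressivity of the MP block, yielding $X_1^P \neq X_2^P$ as multisets whenever $X_1' \neq X_2'$; and (c) holds because $A^P = S^\top A S$ depends only on $A$ and $S$. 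Combining these, 1-WL on the coarsened graphs distinguishes $G_1^P$ and $G_2^P$ already at initialization, since the multisets of supernode attributes differ.

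The main obstacle is that \curvpool~replaces the raw coarsened adjacency $A^P$ by the Ricci-flow-evolved weighted adjacency $C_T^P$, and one has to argue that this post-processing does not obscure the distinguishability provided by the already-distinct $X^P$. I would handle this by noting that 1-WL only requires the initial color multiset to differ for two graphs to be separated at step zero, and initial colors are determined by the node attributes $X^P$ alone; thus changes to the edge weights induced by Ricci flow cannot undo the distinguishability obtained at initialization. For completeness, one can observe further that Eq.~\ref{eq:ricci-flow} is permutation-equivariant in the input graph (the curvature of each edge is a symmetric function of the local weighted neighborhood), so Ricci flow cannot introduce spurious isomorphisms between the coarsened graphs either. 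Together, these observations complete the corollary.
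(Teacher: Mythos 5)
Your overall strategy coincides with the paper's: both proofs reduce the corollary to the expressivity-preservation theorem of Bianchi et al.\ and then check its hypotheses for \curvpool. The difference is that the conditions you verify are not the conditions of that theorem. As quoted in the paper, the theorem requires (1) that the \emph{sums} of the learned node features differ, $\sum_{x\in X_1'}x\neq\sum_{\tilde x\in X_2'}\tilde x$; (2) that the selection assigns each node to a unique supernode, $\sum_{j=1}^K s_i^j=1$ for all $i$; and (3) that the reduction is the weighted sum $x_j^P=\sum_{i=1}^N x_i's_i^j$. Your conditions (a) determinism of \textsc{Sel}, (b) multiset-injectivity of \textsc{Red}, and (c) determinism of \textsc{Con} do not match these: the theorem imposes no condition on the connection function at all, and determinism is neither necessary nor sufficient for the unique-assignment requirement in (2).

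The concrete gap is condition (1). The corollary's hypothesis is only $X_1'\neq X_2'$ as multisets, which does not by itself imply that the sums differ; one needs the MP block to be sum-injective (GIN-style), and the paper discharges this by observing that condition (1) is a property of the base layers, fulfilled whenever they are as powerful as the 1-WL test. You gesture at the Deep Sets/GIN construction, but you attach it to the reduction function rather than stating it as a hypothesis on the MP block, so as written the passage from $X_1'\neq X_2'$ to distinguishable pooled graphs is not justified. Your verifications of (2) and (3) do land on the right facts (the row-wise softmax gives $S\boldsymbol{1}_K=\boldsymbol{1}_N$, and $X^P=S^TX$ is exactly the required sum-aggregation), and your closing observation --- that the Ricci-flow re-weighting of $A^P$ cannot undo distinguishability already present in the supernode attribute multisets, and is itself permutation-equivariant --- is a sensible addition that the paper leaves implicit when it asserts the conditions hold for \curvpool\ because they hold for \mincutpool.
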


\vspace{-5pt}
\paragraph{Topological Effects.}
MPGNNs are known to suffer from \emph{oversmoothing}~\cite{li2018deeper}, which describes the convergence of the representations of dissimilar nodes in densely connected subgraphs as the number of layers increases. This effect is particularly prevalent in node-level tasks, e.g., negatively impacting the GNN's ability to perform node clustering. \curvpool~ mitigates this effect in two ways: By assigning higher weights to long-range connections and small weights to inter-community connections, distances between similar nodes are contracted and distances between dissimilar nodes are expanded. 
Moreover, pooling layers generally counteract oversmoothing, as they induce scale-separation.
That is, local and global features are preserved separately, as they are encoded on different coarsening scales. Since similar nodes are merged into supernodes, neighboring supernodes tend to be less similar than neighboring nodes at the previous scale, alleviating oversmoothing on coarser scales. Conversely, limiting the number of layers could avoid over-smoothing, but shallow MPGNNs are known to suffer from under-reaching~\cite{Barceló2020The}: If the number of layers is smaller than the graph's diameter, information between distant nodes is not exchanged, which can limit the utility of the learned representations in downstream tasks. This underscores the value of effective pooling layers for the design of deeper GNNs. We comment on the impact of pooling on over-squashing, a related effect, in Apx~\ref{apx:theory}.

\subsection{Structural Impact of Curvature Adjustment}
\begin{wrapfigure}{R}{0.2\linewidth}
\centering
    \vspace{-10pt}
    \includegraphics[width=1.5cm]{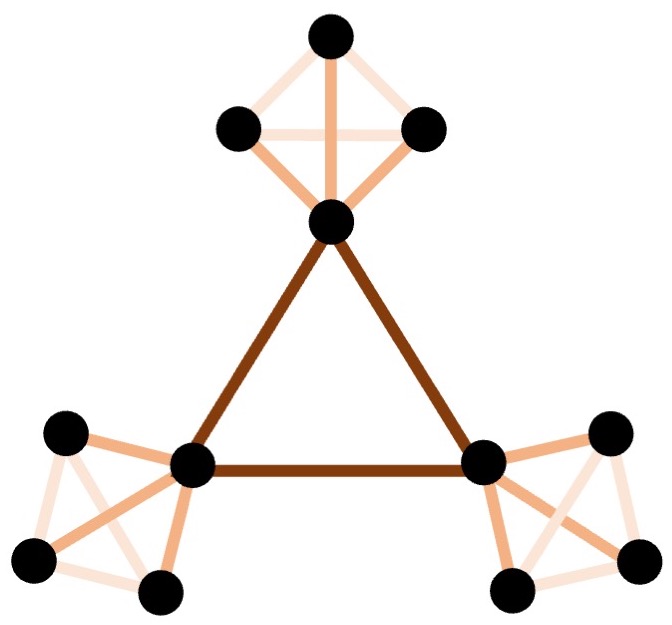}
    \caption{$G_{a,b}$ ($a=b=3$)}
    \label{fig:Gab}
    \vspace{-10pt}
\end{wrapfigure}
We further corroborate our earlier claim that ORC flow reveals coarse geometry and emphasizes the community structure, making it easier to learn suitable graph cuts. For our analysis, we employ a graph model, which exhibits community structure as found in real data:
\begin{defn}\label{def:G-ab}
Consider a class of model graphs $G_{a,b}$ ($a \geq b \geq 2$), which are constructed by taking a complete graph with $b$ nodes and replacing each by a complete graph with $a+1$ nodes. 
\end{defn}
$G_{a,b}$-graphs can be seen as instances of stochastic block models with $b$ blocks of size $a+1$ and exhibit a clear community structure (see Fig.~\ref{fig:Gab} for an example). Due to the regularity of the graph, we can categorize its edges into three types: (1) \emph{bridges}, which connect dissimilar nodes in different clusters; (2) \emph{internal} edges, which connect similar nodes in the same cluster that are at most one hop removed from a dissimilar node; and (3) all other \emph{internal} edges, which connect similar nodes within the same cluster.~\cite{ni2019community} derived an evolution equation for edge weights under ORC flow for $G_{a,b}$ graphs:
\begin{lem}[informal, \cite{ni2019community}]\label{lem:flow-evolve}
    Let $w^t=[w_1^t, w_2^t, w_3^t]$ denote the weights of edges of types (1)-(3). Assuming $w^0=[1, 1, 1]$, the edge weights evolve under ORC flow (Eq.~\ref{eq:ricci-flow}) as $w^{t+1}=F(a,b)w^t$, where $F(a,b)$ is a fixed $(3 \times 3)$-matrix, depending on $a,b$ only.
\end{lem}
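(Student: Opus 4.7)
The plan is to exploit the high symmetry of $G_{a,b}$ to collapse the full edge-weight evolution onto a three-dimensional linear system.

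\emph{Step 1: Symmetry reduction.} I would first verify that the automorphism group ${\rm Aut}(G_{a,b})$ acts transitively on each of the three edge classes. Arbitrary permutations of the $b$ clusters map bridges to bridges; permutations within a single cluster that fix the designated bridge endpoint act transitively on the internal edges of types (2) and (3), respectively. Because Ollivier's construction (Wasserstein-1 distance and shortest-path metric) is equivariant under graph automorphisms, every $\sigma \in {\rm Aut}(G_{a,b})$ commutes with the ORC flow map of Eq.~\ref{eq:ricci-flow}. Consequently, starting from the type-uniform weight $w^0=(1,1,1)$, the flow preserves type-uniformity at every step, so the state is fully determined by the triple $w^t=(w_1^t,w_2^t,w_3^t)$.

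\emph{Step 2: Neighborhood and distance bookkeeping.} For a representative edge $(u,v)$ of each type, I would tabulate (a) the 1-hop neighborhoods of $u$ and $v$, together with the measures $p_u^t, p_v^t$ under the first-order approximation from Section~\ref{sec:curv}; and (b) the shortest-path distance $d_{G^t}(u,v)$ as a function of the current weight vector. For $G_{a,b}$ the direct edge realizes the shortest path between adjacent vertices whenever the weights remain positive and of comparable magnitude, so $d_{G^t}(u,v)=w_i^t$ for an edge of type $i$, and the update reduces to $w_i^{t+1}=W_1(p_u^t,p_v^t)$.

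\emph{Step 3: Linearity of the Wasserstein transport cost.} For each edge type I would describe an explicit optimal coupling $\gamma^\star$ between $p_u^t$ and $p_v^t$. The 1-hop neighborhoods overlap substantially (both endpoints share every other vertex of the common cluster for internal edges; for bridges, each endpoint sees its own cluster plus its fellow bridge-partners), and the shared mass is left in place at zero cost. The residual mass is transported along edges whose \emph{types} are determined purely by the combinatorics of $G_{a,b}$, so the total transport cost is a fixed $\mathbb{R}$-linear combination of $(w_1^t,w_2^t,w_3^t)$. Collecting the three linear forms that result from the three edge classes yields the rows of $F(a,b)$, with entries depending only on $a$ and $b$.

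\emph{Main obstacle.} The substantive step is justifying Step 3: \emph{a priori} $W_1$ is only piecewise linear in the edge weights, with breakpoints where the optimal coupling changes. I would need to show that, along the forward orbit of $w^0=(1,1,1)$ under the candidate map $F(a,b)$, the combinatorial structure of the optimal coupling is invariant. The natural approach is to produce primal-dual certificates — a coupling $\gamma^\star$ together with a 1-Lipschitz potential $\varphi$ — whose optimality conditions hold uniformly over an open cone of weight vectors containing the orbit, and then to verify that $F(a,b)$ preserves this cone. Establishing a monotonicity relation such as $w_1^t \geq w_2^t \geq w_3^t$ along the flow (reflecting that bridges acquire the highest weight, internal edges adjacent to bridges an intermediate weight, and purely-internal edges the lowest) is the cleanest way to ensure stability of the transport plan and thereby to close the induction.
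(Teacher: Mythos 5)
The paper does not actually prove this lemma: it is imported verbatim from Ni et al.~\cite{ni2019community}, and the appendix merely restates it with the explicit matrix
$F(a,b)$ (whose rows are $\bigl(\tfrac{a-1}{a+b},\tfrac{2a}{a+b},0\bigr)$, $\bigl(\tfrac{b}{a+b},\tfrac{ab-a-b}{a(a+b)},\tfrac{1}{a+b}\bigr)$, $\bigl(0,0,\tfrac{1}{a}\bigr)$). So your from-scratch derivation is necessarily a different route, and its skeleton is the right one: the automorphism-equivariance argument in Step~1 is exactly what justifies collapsing the dynamics onto three coordinates, and you correctly identify that the real work is pinning down the optimal coupling and showing its combinatorial structure is stable along the orbit (which, per Lemma~\ref{lem:Fab-eval}, does satisfy $w_1^t>w_2^t>w_3^t$ asymptotically, consistent with the cone you propose).

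There are two gaps you should close. First, and most substantively, Step~3's claim that the transport cost is a \emph{fixed} linear form in $(w_1^t,w_2^t,w_3^t)$ tacitly assumes the neighborhood measures $p_u^t$ are weight-independent (e.g., uniform over neighbors). Under the measure actually defined in Section~\ref{sec:curv}, $p_v \approx \alpha I + (1-\alpha)\delta_v D_B^{-1}B$ with $b_{ij}=e^{-w_{ij}}$, the mass placed on each neighbor changes with the evolving weights, so $W_1(p_u^t,p_v^t)$ is genuinely nonlinear in $w^t$ — not merely piecewise linear — and no fixed matrix $F(a,b)$ exists. The exact linear recursion of the lemma holds for the uniform-measure variant used by Ni et al.; you need to state which measure you are using and, if it is the weight-dependent one, the lemma as stated is false in that generality. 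Second, the plan is not executed: the matrix entries are never computed, and the primal-dual certificate that would resolve your own ``main obstacle'' is described but not produced, so the induction is not actually closed. Minor points: the per-iteration normalization mentioned after Eq.~\ref{eq:ricci-flow} is silently dropped (harmless, as it is a global rescale, but worth a sentence), and the reduction $w_i^{t+1}=W_1(p_u^t,p_v^t)$ in Step~2 requires verifying that the direct edge remains a shortest path for all three types along the whole orbit, not just at comparable weights — for type-$1$ edges this holds because any alternative route crosses at least two bridges, but the check for type-$2$ and type-$3$ edges should be made explicit given that $w_3^t\to 0$ while $w_1^t,w_2^t$ grow.
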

\noindent Utilizing this result we want to analyze the strength of the community structure subject to Ricci flow, employing modularity~\cite{gomez} as a measure:
\begin{equation*}
    Q(W) := \frac{1}{\sum_{uv} w_{uv}} \sum_{uv} \left( 
        w_{uv} - \frac{d_u d_v}{\sum_{uv} w_{uv}}
    \right) \delta(C_u,C_v) \; .
\end{equation*}
\noindent Here, $W=(w_{ij})$ denotes a weighted adjacency matrix, $d_v$ weighted node degrees and $\delta(C_u,C_v)=1$, if $u,v$ are in the same cluster and zero otherwise.
High modularity is often used as an indicator of a ``strong'' separation of the graph into subgraphs, under which communities can be easier to detect.
We show that modularity increases as edge weights evolve under ORC flow:
\begin{thm}[informal]\label{thm:flow}
    Let $W$ denote the original adjacency matrix and $C_t$ the curvature-adjusted adjacency matrix after $t$ iterations of ORC flow. $Q(C_t)$ increases asymptotically under ORC flow.
\end{thm}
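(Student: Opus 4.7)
The plan is to reduce modularity to a scalar function of the three edge-type weights by exploiting the block-symmetry of $G_{a,b}$, and then combine Lemma~\ref{lem:flow-evolve} with Perron--Frobenius to identify the asymptotic limit of $Q(C_t)$.

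First, by the block-symmetry of $G_{a,b}$, every edge of type $i \in \{1,2,3\}$ carries the same weight $w_i^t$, every vertex has the same weighted degree, and every cluster $C$ has the same volume $\mathrm{vol}(C) = 2m^t/b$, where $m^t = \tfrac{1}{2}\bigl(N_1 w_1^t + N_2 w_2^t + N_3 w_3^t\bigr)$ and $N_i$ is the number of type-$i$ edges in $G_{a,b}$. Substituting into the definition of $Q$ and using $\delta(C_u, C_v)=1$ only for internal edges (types 2 and 3), the modularity collapses to
\begin{equation*}
    Q(C_t) \;=\; \rho(w^t) \;-\; \tfrac{1}{b}, \qquad
    \rho(w) \;:=\; \frac{N_2 w_2 + N_3 w_3}{N_1 w_1 + N_2 w_2 + N_3 w_3}\; .
\end{equation*}
Since $\rho$ is scale-invariant in $w$, the per-iteration normalization of edge weights is immaterial.

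Second, Lemma~\ref{lem:flow-evolve} gives $w^t = F(a,b)^t\, \mathbf{1}$, with $F(a,b)$ a $3 \times 3$ matrix with non-negative, irreducible coefficient pattern (inherited from the local symmetry of $G_{a,b}$ under the flow). Perron--Frobenius then provides a simple dominant eigenvalue $\lambda_1 > |\lambda_2|, |\lambda_3|$ and a strictly positive eigenvector $v_1$. Decomposing $\mathbf{1} = c_1 v_1 + c_2 v_2 + c_3 v_3$ with $c_1 > 0$ gives $w^t/\lambda_1^t \to c_1 v_1$ as $t \to \infty$, and by scale-invariance of $\rho$,
\begin{equation*}
    Q(C_t) \;\xrightarrow{t \to \infty}\; Q^\star \;:=\; \rho(v_1) \;-\; \tfrac{1}{b}\; .
\end{equation*}

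Third, it remains to compare $Q^\star$ with $Q(C_0) = \rho(\mathbf{1}) - 1/b$, which reduces the theorem to the inequality $\rho(v_1) > \rho(\mathbf{1})$. Reading off the entries of $F(a,b)$ from the computation of Ni et al.\ and using the fact that on $G_{a,b}$ bridge edges carry strictly different (more negative) Ollivier curvature than internal edges, one tracks how each coordinate of $F\mathbf{1}$ relates to $\mathbf{1}$, extracts the sign pattern of $v_1 - \mathbf{1}$, and verifies the inequality by direct algebraic manipulation. The main obstacle lies precisely here: although the matrix is only $3 \times 3$, the dependence of its entries on $a$ and $b$ is intricate, and the sign analysis must be carried out uniformly for all $a \geq b \geq 2$. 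A cleaner alternative, if it holds, is to establish the one-step monotonicity $\rho(Fw) \geq \rho(w)$ for all positive $w$; this would upgrade the asymptotic statement to monotone convergence and sidestep the explicit eigenvector computation entirely.
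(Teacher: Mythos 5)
Your reduction of modularity to the scalar function $\rho$ is sound and in fact cleaner than the paper's computation: by the symmetry of $G_{a,b}$ all clusters have equal volume, so the degree-product term contributes exactly $1/b$ and $Q(C_t)=\rho(w^t)-1/b$ holds exactly. (Note that you do not need, and do not have, equal weighted degrees for all vertices --- bridge-adjacent and internal nodes have different degrees, as the paper's computation of $d_b^t$ and $d_i^t$ shows; equal cluster volumes suffice.) The observation that $\rho$ is scale-invariant, so the per-iteration normalization is immaterial, is also a worthwhile point absent from the paper. Two smaller slips: the matrix $F(a,b)$ is \emph{not} irreducible (its third row is $(0,0,1/a)$, so the type-(3) weight evolves autonomously and decays as $(1/a)^t$), so Perron--Frobenius in the form you invoke does not apply; the spectral facts you need ($\lambda_1>1$ simple and dominant, $\lambda_2=1/a$, $\lambda_3<0$, positive leading coefficient in the expansion of $\mathbf{1}$) are instead supplied by the lemma of Ni et al.\ that the paper quotes.

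The genuine gap is your third step. You reduce the theorem to the inequality $\rho(v_1)>\rho(\mathbf{1})$ and then explicitly decline to prove it, calling the uniform-in-$(a,b)$ sign analysis ``the main obstacle'' and offering the one-step monotonicity $\rho(Fw)\ge\rho(w)$ only conditionally. That is exactly where the content of the theorem lies, so the proposal is not a proof. You have also quietly changed the statement: comparing the limit $Q^\star$ with the initial value $Q(C_0)$ is a different claim from the one the paper establishes, namely that $Q(C_t)$ is an eventually increasing function of $t$. The paper closes the argument without any comparison to $t=0$ and without any eigenvector inequality: writing $\Sigma_j^t$ for the total weight carried by type-$(j)$ edges, it uses the asymptotics $w_1^t\sim a_1\lambda_1^t$, $w_2^t\sim k a_1\lambda_1^t$ with $k\in(0,1)$, and $w_3^t=(1/a)^t$ to show that $Q(C_t)$ is asymptotically dominated by $\Sigma_2^t/\Sigma^t=\bigl(1+(\Sigma_1^t+\Sigma_3^t)/\Sigma_2^t\bigr)^{-1}$, and that the ratio $(\Sigma_1^t+\Sigma_3^t)/\Sigma_2^t$ decreases in $t$ because, after cancelling $\lambda_1^t$, the only remaining $t$-dependence is the decaying type-(3) contribution. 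In your formulation the analogous and easy route is to show that $\Sigma_1^t/(\Sigma_2^t+\Sigma_3^t)$ is eventually decreasing, which again follows from the decay of $w_3^t$ relative to $\lambda_1^t$; if you instead insist on the comparison with $t=0$, you must actually carry out the verification of $\rho(v_1)>\rho(\mathbf{1})$, which as written you have not done.
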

\noindent We defer further technical details of Lem.~\ref{lem:flow-evolve} and Thm.~\ref{thm:flow} to supp.~\ref{apx:theory}.

\section{Experimental Analysis of Geometric Pooling}
\begin{table*}[t]
\label{tab:node-level}
\scriptsize
\centering
\caption{\textbf{Node Clustering.} Average accuracy (NMI) for \curvpool~in comparison with state of the art pooling layers; average time per epoch (in seconds) is given in brackets. The reported times (mean/ standard deviation) are computed based on 10 trials. Highest accuracy in bold. 
}
\begin{tabular}{lccc}
\toprule
\textbf{Layer} & Cora & CiteSeer & PubMed \\
\midrule
\textbf{Diff} & $0.20 \pm 0.04$ ~($0.010 \pm 0.000$) & $0.24 \pm 0.12$ ~($0.009 \pm 0.000$) & $0.03 \pm 0.02$ ~($0.082 \pm 0.002$) \\
\textbf{Mincut} & $0.43 \pm 0.03$ ~($0.018 \pm 0.001$) & $0.31 \pm 0.04$ ~($0.016 \pm 0.001$) & $0.23 \pm 0.04$ ~($0.689 \pm 0.003$) \\
\textbf{DMoN} & $0.37 \pm 0.05$  ~($0.010 \pm 0.000$) & $0.29 \pm 0.03$ ~($0.010 \pm 0.000$) & $0.18 \pm 0.04$ ~($0.064 \pm 0.001$) \\
\textbf{TV} & $0.33 \pm 0.05$  ~($0.010 \pm 0.000$) & $0.30 \pm 0.05$ ~($0.010 \pm 0.000$) & $0.21 \pm 0.03$ ~($0.069 \pm 0.001$) \\
\textbf{ORC (us)} & $\mathbf{0.47 \pm 0.04}$  ~($0.035 \pm 0.000$) & $\mathbf{0.35 \pm 0.04}$ ~($0.029 \pm 0.001$) & $\mathbf{0.24 \pm 0.04}$ ~($0.92 \pm 0.003$) \\
\bottomrule
\end{tabular}
\end{table*}
\begin{table*}[t]
\label{tab:TU}
\setlength{\tabcolsep}{3pt}
\scriptsize
\centering
\caption{\textbf{Graph Classification.} Average classification accuracy for \curvpool~in comparison with state of the art pooling layers, averaged over 10 trials, using a $80/10/10$ train/val/test split. 
Highest accuracy in bold. }
\begin{tabular}{lccccccc}
\toprule
\textbf{Layer} & MUTAG & ENZYMES & PROTEINS & IMDB-BINARY & REDDIT-BINARY & COLLAB & PEPTIDES \\
\midrule
\textbf{Diff} & $0.83 \pm 0.09$  & $0.31 \pm 0.03$  & $0.75 \pm 0.04$ & $0.64 \pm 0.06$ & $0.86 \pm 0.02$ & $0.70 \pm 0.02$ & $0.66 \pm 0.02$ \\
\textbf{Mincut} & $0.83 \pm 0.07$  & $0.37 \pm 0.05$ & $0.76 \pm 0.05$ & $0.65 \pm 0.05$ & $0.85 \pm 0.02$ & $0.67 \pm 0.02$ & $0.67 \pm 0.02$ \\
\textbf{DMoN} & $0.84 \pm 0.08$  & $\mathbf{0.40 \pm 0.04}$ & $0.76 \pm 0.05$ & $0.65 \pm 0.04$ & $0.85 \pm 0.02$ & $0.68 \pm 0.02$ & $0.67 \pm 0.02$  \\
\textbf{TV} & $0.83 \pm 0.08$  & $0.37 \pm 0.05$ & $0.75 \pm 0.04$ & $0.54 \pm 0.03$ & $0.85 \pm 0.03$ & $0.70 \pm 0.02$ & $0.67 \pm 0.02$ \\
\textbf{ORC} & $\mathbf{0.90 \pm 0.06}$ & $0.38 \pm 0.06$ & $\mathbf{0.78 \pm 0.04}$  & $\mathbf{0.71 \pm 0.04}$ & $\mathbf{0.88 \pm 0.02}$ & $\mathbf{0.71 \pm 0.02}$ & $\mathbf{0.69 \pm 0.01}$\\
\bottomrule
\end{tabular}
\end{table*}

\begin{table*}[t]
\label{tab:ORC}
\centering
\scriptsize
\caption{\textbf{Curvature computation.} Comparison of accuracy (NMI) and runtime (in brackets) of computing ORC exactly (EMD), via Sinkhorn distances (Sinkhorn), and via combinatorial ORC approximation. Best runtime in bold.} 
\begin{tabular}{lccc}
\toprule
\textbf{Layer} & Cora & CiteSeer & PubMed \\
\midrule
\textbf{EMD} & $0.47 \pm 0.04$ ($19.97 \pm 0.35$) & $0.35 \pm 0.04$ ($16.77 \pm 0.21$) & $0.24 \pm 0.04$ ($571.11 \pm 2.94$)\\
\textbf{Sinkhorn} &  $0.45 \pm 0.03$ ($56.36 \pm 1.43$) & $0.35 \pm 0.03$ ($42.16 \pm 0.71$) & $0.22 \pm 0.05$ ($904.38 \pm 3.08$) \\
\textbf{ORC-approx} & $0.45 \pm 0.03$ ($\mathbf{16.88 \pm 0.43}$) & $0.35 \pm 0.03$ ($\mathbf{14.62 \pm 0.10}$)& $0.21 \pm 0.03$ ($\mathbf{548.43 \pm 1.79}$) \\
\bottomrule
\end{tabular}
\vspace{-10pt}
\end{table*}
In this section we present experiments to demonstrate the advantage of our proposed pooling layer \curvpool. We test our \textbf{hypothesis} that
\emph{encoding local and global geometric information into the pooling layers can increase the accuracy of the GNN in downstream tasks.}

\paragraph{Experimental setup.}  
We implement a simple GNN architecture, consisting of blocks of GCN base layers, followed by a pooling layer. When comparing \curvpool~with other state of the art pooling layers, we keep the architecture fixed (only altering pooling layers) to enable a fair comparison. Details on the architecture used in our node- and graph-level experiments can be found in Apx.~\ref{apx:exp}. 
We utilize the popular benchmarks \textsc{Planetoid}~\cite{Planetoid} for node clustering, and ~\textsc{TUDataset}~\cite{TUDataset} and \textsc{LRGBDataset}~\cite{LRGB} for graph classification. Experiments are performed using \textsc{PyTorch Geometric}.
We compare \curvpool~with four state of the art pooling layers, which were reported as best-performing across domains and graph learning tasks~\cite{grattarola_understanding_2021}: \textsc{MinCutPool} ~\cite{mincut},
\textsc{DiffPool} ~\cite{ying}, \textsc{TVPool} ~\cite{TVGNN}, and 
Deep Modularity Networks (\textsc{DMoN}) ~\cite{DMoNPool}.
We implement \curvpool~using exact ORC, i.e., $W_1(\cdot,\cdot)$ is computed via the earth mover's distance. 

\vspace{-5pt}
\paragraph{Node Clustering.}
We compare the performance of \curvpool~and other pooling layers for node clustering, where we evaluate the \emph{Normalized Mutual Information} (short \emph{NMI}, defined in sec.~\ref{def:nmi}) of the cluster assignments computed by the GNN, as well as the average runtime per epoch. The number of desired clusters is known to the model beforehand. Results for \textsc{Planetoid} are reported in Tab.~\ref{tab:node-level}. 
We see that \curvpool~performed best overall with the highest average NMI on all three graphs.
The second best model in all cases was \textsc{MinCutPool}, which the \curvpool~ layer is based on. In terms of runtime, we observed that for the larger PubMed graph, \textsc{DiffPool}, \textsc{DMoN}, and \textsc{TVPool} are about one order of magnitude faster per epoch compared to \textsc{MinCutPool} and \curvpool. The difference in time per epoch is much smaller for Cora and CiteSeer.

\vspace{-5pt}
\paragraph{Graph classification.}
We further compare the performance of \curvpool~with that of other pooling layers for graph classification, where we report the accuracy of label assignments. 
\curvpool~ performed best overall on \textsc{TUDataset}, achieving the best accuracy on all datasets but ENZYMES (see Tab.~\ref{tab:TU}). We further tested \curvpool~on \textsc{Peptides}, a large-scale graph classification tasks from the long-range graph benchmark \textsc{LRGB}~\cite{LRGB}. Here, again, \curvpool~showed superior performance. Run times for all experiments can be found in Tab.~\ref{tab:graph-time}.

\vspace{-5pt}
\paragraph{Curvature computation.}\label{sec:exp-orc}
\curvpool~defines a \emph{dense} pooling layer; i.e., $\mathbb{E}\left[\mathcal{V}_k/ \vert N \vert \right]=O(N)$. This property is inherited from the \mincutpool~objective and preserved by the curvature adjustment. Hence, the complexity of the pooling  operator is $O(NK)$ for storage and $O(K(\vert E \vert + NK))$ for minimizing the relaxed min-cut loss~\cite{bianchi2020hierarchical}.
As noted in sec.~\ref{sec:comp-consid}, the computation of the curvature-adjustment can be costly on large, dense graphs. 
We test the performance of the two introduced curvature approximations (Sinkhorn distances, combinatorial ORC) on the accuracy and average runtime per epoch. We focus on node-level tasks, where input graphs are of larger size. Our results (Tab.~\ref{tab:ORC}) indicate that computing Sinkhorn distances is actually slower than EMD on  large, dense input graphs. This is consistent with previous results for curvature-based community detection on graphs with similar topology~\cite{tian2023curvature}. In contrast, the combinatorial ORC approximation delivers a faster method, which retains accuracy similar to exact ORC. We note that our implementation can likely be further optimized, which could lead to additional speedups, e.g., via more effective parallelization.

\section{Discussion}
We introduced \curvpool, a geometric pooling operator that leverages coarse geometry, characterized by Ollivier-Ricci curvature and an associated geometric flow.
\curvpool~extends a class of Ricci flow based clustering algorithms to attributed graphs and can be incorporated into GNN architectures as a pooling layer. 
While curvature adjustment leads to improved performance in downstream tasks, it also adds computational overhead, specifically in node-level tasks with large, dense input graphs. While our experiments indicate that the runtime impact is modest in most cases, we show that a combinatorial ORC approximation can reduce runtime impact  while retaining accuracy on node-level tasks. Nevertheless, an important direction for future work is the study of alternative curvature notions or approximations (see Apx.~\ref{apx:FRC}), which could result in more scalable implementations of~\curvpool. In addition, it would be interesting to investigate the impact of \curvpool~layers in other graph learning tasks, such as graph regression or reconstruction, as well as on a wider range of graph domains. Lastly, it would be interesting to consider extensions to directed input graphs.

\section*{Acknowledgements}
The authors thank Yu Tian for kindly sharing code. 
AF was supported by the Harvard College Research Program and a KURE Fellowship from the Kempner Institute. MW was partially supported by NSF award 2112085.

\bibliography{ref}
\bibliographystyle{plainnat}


\newpage
\appendix
\onecolumn

\section{Curvature Approximation}
\label{apx:curv-approx}

\subsection{Combinatorial ORC Approximation}
For completeness, we restate the upper and lower bounds on ORC derived by Tian et al. ~\cite{tian2023curvature} in our notation. We compute the combinatorial ORC approximation defined in sec.~\ref{sec:comp-consid} using these bounds.

Recall the definition of ORC given in Eq. \ref{eq:ORC-dyn}, and note that we fix $t=1$ and $\alpha = 0$. 
Let $N(x)$ be the 1-hop neighborhood of node $x$.
For vertices $u, v$ we have $\ell \coloneqq N(u) \setminus N(v)$, $r \coloneqq N(v) \setminus N(u)$, and $c \coloneqq N(u) \cap N(v)$. Mass distributions on node neighborhoods are denoted as $p_u(x)$.
We further define the shorthands $L_u \coloneqq \sum_{\ell} p_u(\ell)$ and $L_v \coloneqq \sum_{r}p_v(r)$. $a_+$ denotes the quantity $\max(0, a)$.

\begin{thm}[Lower bound (\cite{tian2023curvature}, Thm. 4.6)]
    \begin{align*}
    \kappa^1_{uv} \geq 1& - \sum_{\ell} \frac{w_{\ell u}}{w_{uv}} p_u(\ell) - \sum_r \frac{w_{rv}}{w_{uv}} p_v(r) \\
    &- \sum_c \left[ \frac{w_{cv}}{x_{uv}} (p_u(c) - p_v(c))_+ + \frac{w_{cu}}{w_{uv}}(p_v(c) - p_u(c))_+\right] \\
    &- \left| L_u + p_u(u) - p_v(u) - \sum_c (p_v(c) - p_u(c))_+\right|.
\end{align*}
\end{thm}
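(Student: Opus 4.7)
The claimed bound is equivalent to an upper bound on the Wasserstein-1 distance, since $\kappa^1_{uv} = 1 - W_1(p_u, p_v)/w_{uv}$. The plan is to exhibit an explicit transport plan $\pi$ from $p_u$ to $p_v$ and upper bound its cost by the quantity in brackets times $w_{uv}$. Since $W_1$ is the infimum over couplings, any valid $\pi$ gives such an upper bound, yielding the lower bound on $\kappa^1_{uv}$.

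\textbf{Step 1 (construct the transport plan).} I would partition $V$ into $\{u\}, \{v\}, \ell\in N(u)\setminus N(v), r\in N(v)\setminus N(u), c\in N(u)\cap N(v)$, and prescribe transfers as follows:
\begin{enumerate}
\item For each $\ell$: move the full mass $p_u(\ell)$ along the edge $(\ell,u)$ into $u$, at cost $w_{\ell u}\,p_u(\ell)$ per unit distance bound.
\item For each $r$: send mass $p_v(r)$ from $v$ out along $(v,r)$, at cost $w_{rv}\,p_v(r)$.
\item For each common neighbor $c$: keep $\min(p_u(c),p_v(c))$ stationary at $c$; if $p_u(c)>p_v(c)$ ship the excess $(p_u(c)-p_v(c))_+$ from $c$ to $v$ along $(c,v)$; if $p_v(c)>p_u(c)$ ship the deficit $(p_v(c)-p_u(c))_+$ from $u$ to $c$ along $(u,c)$.
\item Finally, balance whatever mass remains at $u$ versus what is required at $u$ under $p_v$ by shipping the net quantity along the edge $(u,v)$; by mass conservation this simultaneously balances $v$.
\end{enumerate}

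\textbf{Step 2 (verify $\pi$ is a valid coupling).} This is the main bookkeeping step and the place where I expect the most care. I would check vertex by vertex that the outgoing mass from each vertex under $\pi$ equals $p_u$ and the incoming mass equals $p_v$. At each $\ell$ and $r$ balance is immediate; at each common neighbor $c$, one side of $\min(p_u(c),p_v(c))$ stays put and the other side is absorbed by the $c\!\leftrightarrow\!u$ or $c\!\leftrightarrow\!v$ transfer. At $u$, the total accumulated mass after items (1)–(3) is $p_u(u)+L_u-\sum_c(p_v(c)-p_u(c))_+$, and the required value is $p_v(u)$; the difference
\[
\Delta := p_u(u)+L_u - p_v(u) - \sum_c (p_v(c)-p_u(c))_+
\]
is precisely the signed mass to be transported across $(u,v)$. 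By $\sum_x p_u(x)=\sum_x p_v(x)=1$, the analogous equation at $v$ is automatically $-\Delta$, so item (4) closes the balance.

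\textbf{Step 3 (cost estimate).} For adjacent nodes $a,b$ the shortest-path distance satisfies $d(a,b)\le w_{ab}$, so bounding each transfer by the weight of the edge used gives
\[
W_1(p_u,p_v) \le \sum_\ell w_{\ell u}p_u(\ell) + \sum_r w_{rv}p_v(r) + \sum_c\!\bigl[w_{cv}(p_u(c)-p_v(c))_+ + w_{cu}(p_v(c)-p_u(c))_+\bigr] + w_{uv}\,|\Delta|.
\]
Dividing by $w_{uv}$ and substituting into $\kappa^1_{uv}=1-W_1(p_u,p_v)/w_{uv}$ yields exactly the stated inequality.

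\textbf{Anticipated obstacle.} The routine calculation is item (3), but the subtle point is the vertex-balance check at $u$ and $v$ in Step 2, especially because $v$ may itself lie in $N(u)$ (so $p_u(v)$ is nonzero and the symbol $p_v(u)$ must be tracked consistently with whether $u,v$ carry self-loops in the diffusion used to define $p$). I would make the indexing convention explicit at the outset to ensure the $p_u(u), p_v(u)$ terms in $\Delta$ are the correct ones, and I would verify that no vertex is being implicitly double-counted between the $c$ sum and the endpoints $u,v$ themselves.
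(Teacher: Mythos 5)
The paper does not actually prove this statement: it is restated verbatim from Tian et al.~(2023, Thm.~4.6) ``for completeness,'' so there is no in-paper proof to compare against. Your strategy --- exhibit an explicit feasible transport plan (consolidate the exclusive neighbors of $u$ at $u$, distribute to the exclusive neighbors of $v$ from $v$, match common neighbors up to the positive parts, and route the residual $\Delta$ across the edge $(u,v)$), then bound $W_1$ by its cost and divide by $w_{uv}$ --- is exactly the standard argument by which the cited source establishes this bound, and the bookkeeping you flag (whether $u,v$ themselves belong to the sets $\ell,r,c$, and the self-loop/$\alpha$ convention behind $p_u(u)$) is indeed the only point requiring care.
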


To state the upper bound, we further define $\mathcal{P} \coloneqq \{x \in N(u) \cup N(y) : p_u(x) - p_v(x) > 0\}$ and $\mathcal{N} \coloneqq \{x \in N_(u) \cup N(y) : p_u(x) - p_v(x) < 0\}$.
For $x \in V$ and $S \subseteq V$, let $d_G(x, S) = \min\{d_G(x, y) : y \in S\}$.

\begin{thm}[Upper bound (\cite{tian2023curvature}, Thm. 4.6)]
    \[
    \kappa^1_{uv} \leq 1 - \max \left\{\sum_x \frac{d_G(x, \mathcal{N})}{w_{uv}}(p_u(x) - p_v(x))_+, \sum_x \frac{d_G(x, \mathcal{P})}{w_{uv}}(p_v(x) - p_u(x))_+ \right\}.
\]
\end{thm}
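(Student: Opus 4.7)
The plan is to obtain the upper bound on $\kappa^1_{uv}$ by producing a lower bound on $W_1(p_u, p_v)$ via Kantorovich--Rubinstein duality and then converting it back through $\kappa^1_{uv} = 1 - W_1(p_u, p_v)/w_{uv}$. Recall that
\[
    W_1(p_u, p_v) = \sup_{f : \mathrm{Lip}(f) \leq 1} \sum_x f(x)\bigl(p_u(x) - p_v(x)\bigr),
\]
so any specific 1-Lipschitz test function $f$ on $(V, d_G)$ yields a valid lower bound. The key observation is that the distance-to-a-set functions $x \mapsto d_G(x, S)$ are automatically 1-Lipschitz for any nonempty $S \subseteq V$, which gives us two natural choices: $f_1(x) := d_G(x, \mathcal{N})$ and $f_2(x) := -d_G(x, \mathcal{P})$.

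First I would verify 1-Lipschitzness of $f_1$ (a one-line triangle inequality argument) and then evaluate $\sum_x f_1(x)(p_u(x) - p_v(x))$. Here the payoff is structural: $f_1$ vanishes identically on $\mathcal{N}$, and $(p_u(x) - p_v(x))$ vanishes off $\mathcal{P} \cup \mathcal{N}$, so the sum collapses onto $\mathcal{P}$, where $(p_u(x) - p_v(x))$ agrees with $(p_u(x)-p_v(x))_+$ and $f_1(x) = d_G(x, \mathcal{N})$. This yields exactly the first term inside the max,
\[
    W_1(p_u, p_v) \;\geq\; \sum_x d_G(x, \mathcal{N})\,(p_u(x) - p_v(x))_+.
\]
Repeating the argument with $f_2(x) = -d_G(x, \mathcal{P})$ (also 1-Lipschitz, by the same triangle inequality up to a sign), the sum now collapses onto $\mathcal{N}$ and produces the symmetric bound with $\mathcal{P}$ and $\mathcal{N}$ swapped. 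Taking the larger of the two bounds and dividing by $w_{uv}$ gives the claimed inequality after rearrangement.

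The core of the argument is therefore quite short once duality is invoked; the only real care needed is bookkeeping of the sign in the second test function and the splitting of the sum into the $\mathcal{P}$, $\mathcal{N}$, and neutral parts. The main obstacle, if any, is checking the precise edge cases of the definition: in the stated form $\mathcal{P}$ and $\mathcal{N}$ are defined by strict inequalities, so one must make sure nodes with $p_u(x) = p_v(x)$ contribute zero to the test sums (they do, trivially) and that both $\mathcal{P}$ and $\mathcal{N}$ are nonempty whenever $p_u \neq p_v$ (which follows from both measures having total mass $1$, forcing $\sum_x (p_u - p_v) = 0$). Once these book-keeping details are in place, no further computation is needed, and the bound follows directly from Kantorovich duality.
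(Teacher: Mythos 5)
Your argument is correct: the paper itself states this bound without proof (it is quoted verbatim from Tian et al., Thm.~4.6), and your route---lower-bounding $W_1(p_u,p_v)$ via the two $1$-Lipschitz test functions $d_G(\cdot,\mathcal{N})$ and $-d_G(\cdot,\mathcal{P})$ in the Kantorovich dual, then dividing by $w_{uv}$---is exactly the standard derivation used in that source. Note that only weak duality is needed (any feasible dual function gives a lower bound on $W_1$), and your handling of the edge case $\mathcal{P}=\mathcal{N}=\emptyset$, where the bound degenerates to $\kappa^1_{uv}\leq 1$, is the right bookkeeping.
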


\subsection{Forman's Curvature}
\label{apx:FRC}
Forman's curvature~\cite{forman} gives a combinatorial notion of Ricci curvature, which aggregates local information in the 2-hop neighborhood of an edge.
That is, we consider the endpoints of a certain edge and all edges that share an endpoint with the edge.
One formalization of this curvature, which considers the contributions of connectivity and triangles for the curvature computation, resembles Ollivier's curvature qualitatively and has found applications in community detection. Its computation is easily parallelizable, allowing for a fast and scalable implementation. This has given rise to a range of applications of Forman's curvature in Network Analysis~\cite{WSJ1,tannenbaum2015ricci,sandhu2016ricci,weber2016forman,weber2017curvature}. While in general less suitable for identifying community structure~\cite{tian2023curvature}, it could present a viable alternative to ORC in large-scale graphs, where computing ORC is infeasible. We can adapt the \curvpool~operator proposed in this paper to the case of Forman's curvature by simply exchanging the curvature notion $\kappa$. 

\section{Illustration of Ricci Flow}
In the main text, we illustrated the evolution of edge weights under Ricci flow on a dumbbell graph. In Fig.~\ref{fig:ric-flow}, we plot networks sampled from the stochastic block model, with two, three, four and five communities (left to right). We show the network with original weights (all equal to one) on the top and curvature-adjusted edge weights (after 5 iterations of Ricci flow) at the bottom, where again darker colors correspond to smaller curvature. We see that curvature reliable uncovers bridges between communities due to their low curvature, whereas edges within communities have a higher weight.
\begin{figure}[ht]
    \centering
    \includegraphics[width=0.9\linewidth]{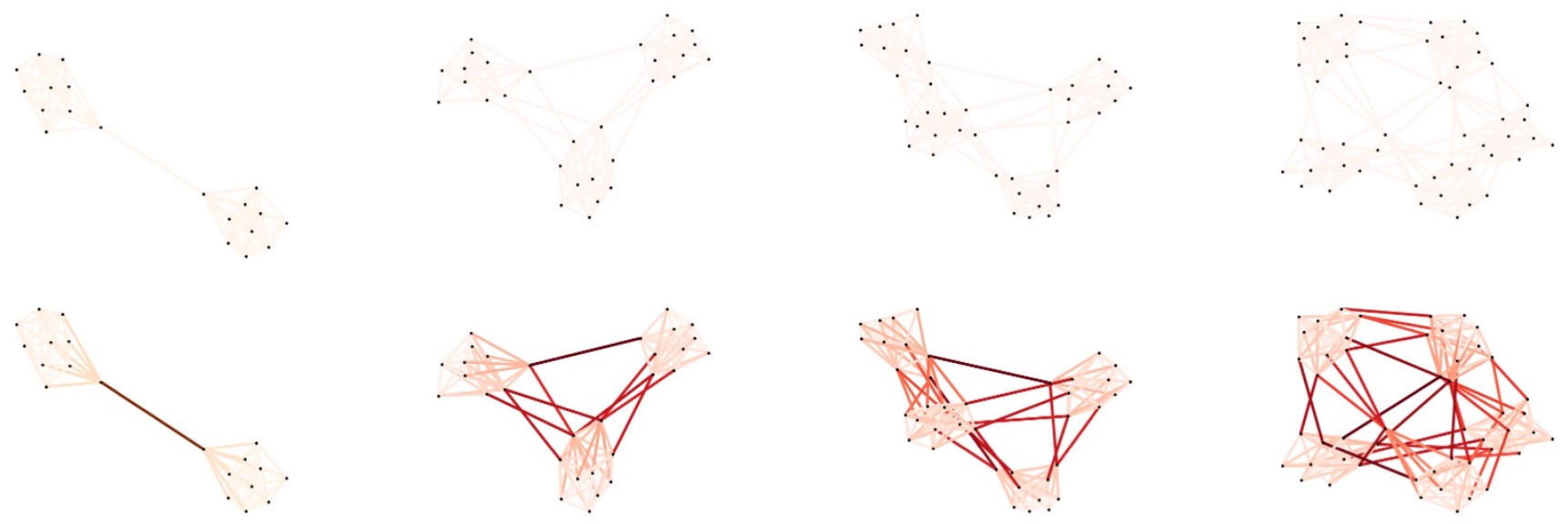}
    \caption{Evolution of edge weights under Ricci flow in the Stochastic Block Model.}
    \label{fig:ric-flow}
\end{figure}

\section{Additional Discussion of Related Work} \label{apx:sanders}
We provide a more detailed discussion of related work by~\cite{sanders}, which also proposes a curvature-based graph pooling approach, albeit with several key differences to our approach. We provide a detailed conceptual comparison of the two approaches below; an experimental comparison can be found in sec.~\ref{sec:sanders-exp}.

\paragraph{Setting.} Classically, graph pooling layers are designed to jointly evaluate similarity structure encoded in the node features and graph connectivity, instead of connectivity only (see, e.g., DiffPool, MincutPool, as well as discussions in references such as~\cite{grattarola_understanding_2021}). Fig.~\ref{fig:intro}  illustrates that in attributed graphs both types of information need to be evaluated to identify meaningful sets of nodes to  be pooled. However, the pooling layer introduced in~\cite{sanders} only evaluates similarity structure encoded in the connectivity. This is also corroborated by the experimental comparison of both approaches below, in which~\curvpool~achieves higher performance for all tasks on attributed graphs.

\paragraph{Motivation.} The motivation for our proposed pooling layer (and that of related pooling layers) is to capture salient multi-scale structure. In sec.~\ref{sec:curv}, we give a geometric motivation for how curvature captures such structure utilizing a connection of discrete Ricci curvature and random walks (see also Fig.~\ref{fig:orc}). On the other hand, the approach in Sanders et al. is motivated by addressing over-smoothing and over-squashing. While we agree that pooling (in general) mitigates over-smoothing due to the induced scale separation (as we discuss in sec.~\ref{sec:topo}), over-squashing effects may actually be amplified as we discuss in the appendix.

\paragraph{Different use of curvature and different notion.} We note that while both the approach in Sanders et al. and our proposed approach leverages different notions of curvature, there are two fundamental differences: (1) Our approach relies on a curvature-adjusted adjacency matrix, which is computed using Ricci flow, a geometric flow associated with discrete Ricci flow. The approach in Sanders et al. uses a discrete notion of curvature, it does not use Ricci flow. (2) Our approach utilizes Ollivier’s Ricci curvature and approximations thereof. Sanders et al. use a “balanced Forman curvature” instead. While related, the two curvature notions differ substantially. The geometric motivation given in our paper does not directly translate to balanced Forman curvature. 

\paragraph{Complexity and hyperparameters.} The balanced Forman curvature used in Sanders et al. has complexity $O(\vert E \vert d_{max}^2)$. On the other hand, a variant of our proposed approach, which utilizes a combinatorial ORC curvature, has complexity of only $O(\vert E \vert d_{max})$, making it more scalable on large-scale graphs. As Sanders et al. state, their approach has several hyperparameters that have to be carefully chosen so as not to simplify the graph too much. Determining this threshold via grid search could add significant computational overhead. On the other hand, the main hyperparameter in our method is the number Ricci flow iterations and the performance is relatively insensitive to the choice of this parameter. 

\section{Additional Details on Experiments}
\subsection{Details on Experimental Setup}\label{apx:exp} 
We implement a GNN architecture consisting of GCN base layers and a single pooling layer, which are jointly trained. For node-level tasks, one GCN layer is used to compute a graph embedding, and another GCN layer is used to compute node clusters from the embedding.
Then the pooling layer computes the loss of the current node assignments, which is used to update the GCN parameters. For the graph-level tasks, our architecture alternates between blocks of GCN base layers and pooling layers, which are jointly trained. We train a global pooling layer on top, which computes the readout by aggregating node representations across the graph. To reduce the computational overhead of the ORC computation, we use the curvature-adjustment only in the first pooling layer, i.e., the second pooling layer is a standard \textsc{MinCutPool} layer. We note that the choice of using \mincutpool~in subsequent layers was driven by a desire to maximize scalability, as the curvature-adjustment would require recomputing curvature in the coarsened graph. However, we note that computing the curvature-adjustment in subsequent layers has reduced computational cost due to the smaller size of the coarsened graph, which suggests that this is a viable extension of the present approach that may be considered in future work. 

All experiments are run on a NVIDIA A100 GPU with one CPU.

\subsection{Node Clustering}\label{apx:exp-node}
\begin{figure}[ht]
    \centering
\includegraphics[width=0.35\linewidth]{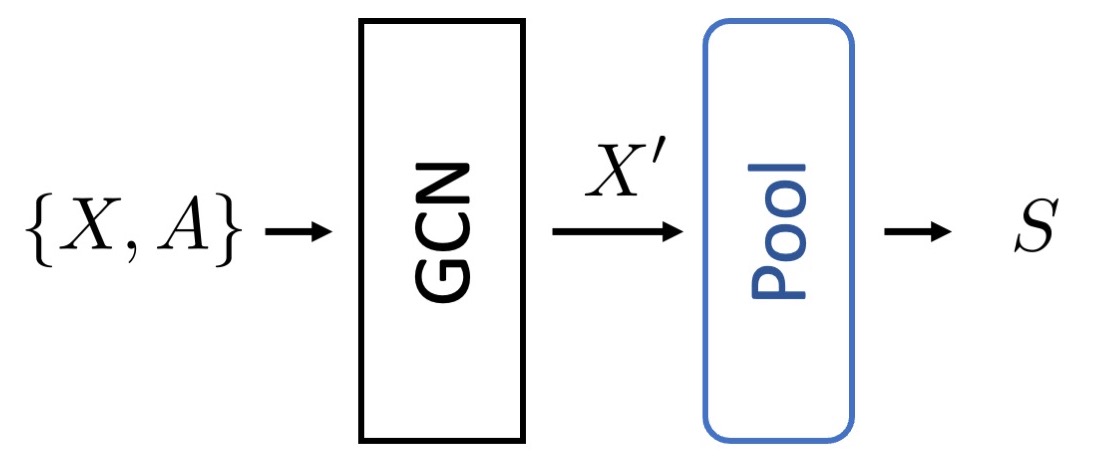}
    \caption{Architecture for node clustering.}
    \label{fig:node-task}
\end{figure}
\paragraph{Architecture.} The first GCN layer embeds the nodes of the graph into an $m$-dimensional feature space.
That is, if the graph initially had a node feature tensor of dimension $n \times k$, where $n$ is the number of nodes, in the output we have a node feature tensor of dimension $n \times m$.
The second GCN layer takes this embedded graph and computes an assignment tensor of dimension $n \times c$, where $c$ is the number of clusters.
By taking the softmax of this $n \times c$ tensor, we obtain the final assignment of each node to a single cluster.

We note that in the original implementation of \textsc{TVPool}, custom MP layers are used before pooling. Since the goal of our experiments is a fair comparison of the effectiveness of the pooling layers, we use standard GCN layers as base layers for all pooling approaches.

\paragraph{Data.} The \textsc{Planetoid} graphs Cora, CiteSeer, and PubMed are citation networks where documents are nodes and citations are edges.
\begin{table}[ht]
\centering
\small
\caption{Planetoid}
\begin{tabular}{lcccc}
\toprule
\textbf{Graph} & \textbf{Nodes} & \textbf{Edges} & \textbf{Features} & \textbf{Classes}\\
\midrule
\textbf{Cora} & $2708$ & $10566$ & $1433$ & 7 \\
\textbf{CiteSeer} & $3327$ & $9104$ & $3703$ & 6\\
\textbf{PubMed} & $19717$ & $88648$ & $500$ & 3 \\
\bottomrule
\end{tabular}
\end{table}

\paragraph{Hyperparameters.} Parameters for the experiments are as follows:
For \textsc{Planetoid}, one GCN layer with output dimension 8 and an ELU activation function is used to embed the graph.
The optimizer is an Adam optimizer with a learning rate of $1\mathrm{e}{-2}$.
The models are trained for at most 10000 epochs, or until the best NMI is found under a patience constraint. That is, if we achieve the current best NMI on epoch $n$ and patience is $p$, we stop training if the model does not have a better NMI at any epoch up to $n + p$.
For the \textsc{Planetoid} graphs, patience is set to 100, and for Amazon-ratings (from \textsc{Heterophilous}), patience is set to 250.
When applying \curvpool~to the \textsc{Planetoid} graphs, four Ricci flow iterations are used, and for Amazon-ratings, two Ricci flow iterations are used.

\paragraph{NMI.}\label{def:nmi}
We measure accuracy as the \emph{Normalized Mutual Information} (short: \emph{NMI}) if the cluster assignments computed by the GNN. NMI is a classical evaluation metric for community detection. In our experiments we employ NMI to measure the accuracy in the node clustering task. We give a brief definition of NMI for completeness.

Let $S \in \mathbb{R}^{N \times k}$ denote a vector that encodes the label assignment to $k$ clusters, i.e., we set $s_{il} = 1$ if node $i$ belongs to cluster $C_l$ and $s_{il} = 0$ otherwise. The entries of $S$ can be viewed as random varibles drawn from the distribution 
$P(s_{l} = 1) = N_l/N$ and $P(s_{l} = 0) = 1 - P(s_{l} = 1)$, where $N_l:=|C_l|$. Using the marginal probability distribution $P_{s_{l}}$ and the joint probability distribution $P(s_{l}, s_{l'})$, we further define the entropies $H(s_l)$ and $H(s_l, s_{l'})$, as well as the conditional entropy of $s_l$ given $s_{l'}$ as $H(s_l|s_{l'}) = H(s_l, s_{l'}) - H(s_{l'})$. The NMI for two cluster assignments $S,S'$ is given by
\begin{equation*}
    NMI(S|S') = 1 - \frac{1}{2}\left( H(S|S') + H(S'|S) \right) \; .
\end{equation*}

\subsection{Graph Classification}\label{apx:exp-graph}
\begin{figure}[ht]
    \centering
\includegraphics[width=0.8\linewidth]{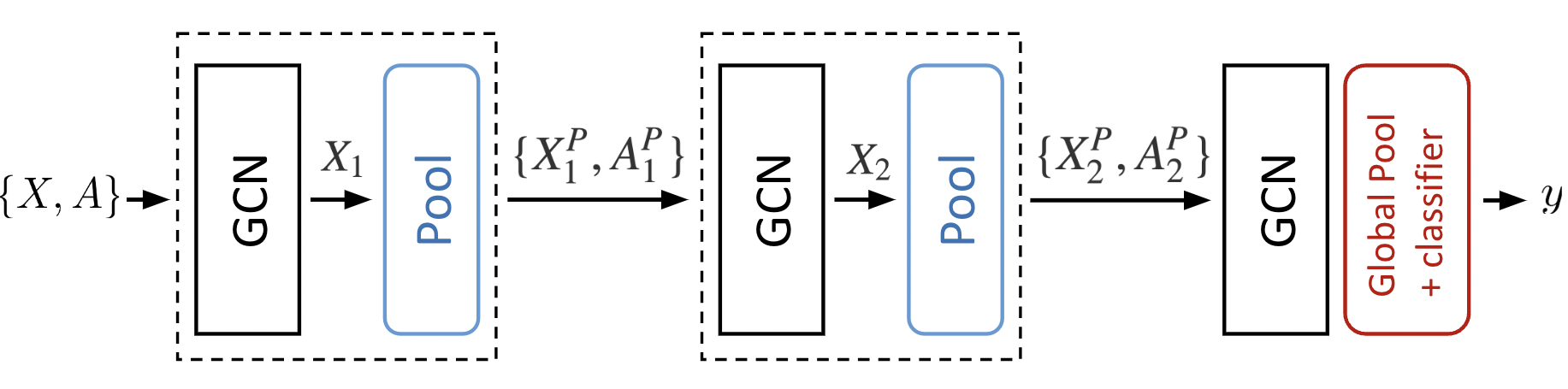}
    \caption{Architecture for graph classification.}
    \label{fig:graph-task}
\end{figure}

\paragraph{Architecture.}
The main part of the model consists of two blocks of GCN layers followed by a pooling layer.
The first set of GCN layer(s) embeds the nodes of the graph into an $m$-dimensional feature space.
That is, if the graph initially had a node feature tensor of dimension $n \times k$, where $n$ is the number of nodes, in the output we have a node feature tensor of dimension $n \times m$.
The second GCN layer takes this embedded graph and computes an assignment tensor of dimension $n \times c$, where $c$ is the number of clusters.
By taking the softmax of this $n \times c$ tensor, we obtain the final assignment of each node to a single cluster.
Here, the clustering simply gives us a new graph in which groups of nodes have been aggregated, which we pass into the second block.
After the second block of GCN layer(s) and pooling, we pass the resulting graph into a third block of GCN layers to get the final node embeddings.
These are passed into a global pooling layer, which aggregates node embeddings.
In our case, we use global mean pooling, which simply computes the average of all node embeddings.
The resulting vector is then passed into a linear classifier.

\paragraph{Data.}
Statistics for \textsc{TUDataset} graphs are shown below ~\cite{TUDataset}.
MUTAG is a common dataset of chemicals where the task is to predict the mutagenic effect.
For ENZYMES, we classify enzymes into one of six classes depending on which type of chemical reaction they catalyze.
PROTEINS has a binary classification task for whether a protein is an enzyme.
REDDIT-BINARY, COLLAB, and IMDB-BINARY are social networks where we predict the subreddit, research field, and genre of people in the network, respectively.
For the graphs that did not have any node features (COLLAB, IMDB-BINARY, REDDIT-BINARY), we add a dummy node feature that is $1.0$ for all nodes.

Peptides-func is the only dataset from \textsc{LRGB} asks to classify peptides based on their function.
\begin{table}[ht]
\small
\centering
\caption{TUDataset and Peptides-func}
\begin{tabular}{lcccc}
\toprule
\textbf{Dataset} & \textbf{\# of Graphs} & \textbf{Features} & \textbf{Classes}\\
\midrule
\textbf{MUTAG} & $188$ & $7$ & $2$\\
\textbf{ENZYMES} & $600$ & $3$ & $6$\\
\textbf{PROTEINS} & $1113$ & $3$ & $2$\\
\textbf{IMDB-BINARY} & $1000$ & $0$ & $2$\\
\textbf{REDDIT-BINARY} & $1000$ & $0$ & $2$\\
\textbf{COLLAB} & $5000$ & $0$ & $3$\\
\midrule 
\textbf{Peptides-func} & $15535$ & $9$ & $10$ \\
\bottomrule
\end{tabular}
\end{table}

\paragraph{Hyperparameters.} Parameters for the experiments are as follows:
For all graphs, the GCN blocks depicted above consist of a GCN layer with output dimension 8 and an ELU activation function.
The optimizer is an Adam optimizer with a learning rate of $5\mathrm{e}{-4}$ and a weight decay of $1\mathrm{e}{-4}$.
The models are trained for at most 10000 epochs, or until the best accuracy on a validation is found under a patience constraint. That is, if we achieve the current best validation accuracy on epoch $n$ and patience is $p$, we stop training if the model does not have a better validation accuracy at any point up to epoch $n + p$.
For all datasets, patience is set to 50.
When applying \curvpool~ to MUTAG, ENZYMES, and PROTEINS, one iteration of Ricci flow curvature is used.
For IMDB-BINARY, REDDIT-BINARY, and COLLAB, two, three, and two iterations are used, respectively.
Two iterations are used for Peptides-func.
We let the pooling layer approximately halve the number of nodes each time.
That is, after the first pooling layer, the number of nodes in $X^P_1$ is the half the average number of nodes for graphs in the dataset.
The number of nodes in $X^P_2$ is half the number of nodes in $X^P_1$.

\paragraph{Runtime comparison.} We report a runtime comparison of the graph-level tasks. 
\begin{table}[ht]
\label{tab:graph-time}
\centering
\small
\caption{Average seconds/epoch on TUDataset and Peptides-func. 
}
\begin{tabular}{lcccc}
\toprule
 & MUTAG & ENZYMES & PROTEINS & IMDB-BINARY \\
\midrule
\textbf{Diff} & $0.057 \pm 0.001$ & $0.17 \pm 0.03$ & $0.34 \pm 0.01$ & $0.29 \pm 0.00$ \\
\textbf{Mincut} & $0.068 \pm 0.001$ & $0.21 \pm 0.03$ & $0.41 \pm 0.00$ & $0.35 \pm 0.01$ \\
\textbf{DMoN} & $0.069 \pm 0.001$ & $0.21 \pm 0.02$ & $0.43 \pm 0.02$ & $0.36 \pm 0.00$ \\
\textbf{TV} & $0.068 \pm 0.000$ & $0.20 \pm 0.02$ & $0.40 \pm 0.01$ & $0.34 \pm 0.02$ \\
\textbf{ORC} & $0.190 \pm 0.003$ & $0.48 \pm 0.03$ & $1.32 \pm 0.00$ & $1.07 \pm 0.02$ \\
\midrule
 & REDDIT-BINARY & COLLAB & PEPTIDES-FUNC \\
\midrule
\textbf{Diff} & $5.28 \pm 0.18$ & $1.74 \pm 0.03$ & $3.79 \pm 0.02$ \\
\textbf{Mincut} & $5.39 \pm 0.17$ & $2.05 \pm 0.01$ & $4.64 \pm 0.07$ \\
\textbf{DMoN} & $2.95 \pm 0.05$ & $2.11 \pm 0.01$ & $4.68 \pm 0.07$ \\
\textbf{TV} & $5.34 \pm 0.09$ & $2.10 \pm 0.02$ & $4.35 \pm 0.05$ \\
\textbf{ORC} & $12.90 \pm 0.17$ & $6.05 \pm 0.01$ & $36.27 \pm 0.10$ \\
\bottomrule
\end{tabular}

\end{table}

\paragraph{Results for Heterophilous Graphs.}
\label{apx:heterophil}
The \textsc{Heterophilous} graph Amazon-ratings is a network of Amazon products where edges connect products that are frequently bought together ~\cite{Heterophilous}.
The five classes correspond to product ratings.
Unlike the \textsc{Planetoid} graphs, we do not expect nodes of the same class to consistently cluster together.
The ``natural'' clusters of the graph generally correspond to categories of items, rather than item ratings.
\begin{table}[ht]
\centering
\small
\caption{Heterophilous data set.}
\begin{tabular}{lcccc}
\toprule
\textbf{Graph} & \textbf{Nodes} & \textbf{Edges} & \textbf{Features} & \textbf{Classes}\\
\midrule
\textbf{Amazon-ratings} & $24492$ & $93050$ & $300$ & 5 \\
\bottomrule
\end{tabular}
\end{table}
Pooling layers combine clusters of nodes in order to coarsen a graph, a procedure whose utility depends on an implicit homophily assumption. Hence, we do not expect GNNs with pooling layers to perform well on data sets such as the Amazon graph. This is confirmed by our experimental results, which show poor performance (in terms of NMI) for all models.
In contrast, in the original \textsc{Heterophilous} study, GCNs without pooling layers were able to attain good performance~\cite{Heterophilous}.

\begin{table}[ht]
\centering
\caption{Amazon}
\begin{tabular}{lcccc}
\toprule
\textbf{Model} & \textbf{NMI} & \textbf{Time/epoch (s)} & \textbf{Total time (s)} \\
\midrule
\textbf{Diff} & $\mathbf{0.0014 \pm 0.00036}$ & $0.1218$ & $53.13 \pm 18.47$ \\
\textbf{Mincut} & $0.00004 \pm 0.00013$ & $1.4494$ & $364.66 \pm 1.89$ \\
\textbf{DMoN} & $0.0012 \pm 0.00009$ & $0.0898$ & $35.48 \pm 0.47$ \\
\textbf{TV} & $0.0010 \pm 0.00011$ & $0.0980$ & $52.99 \pm 16.34$ \\
\textbf{ORC (us)} & $0.00067 \pm 0.00077$ & $1.5369$ & $405.69 \pm 25.41$ \\
\bottomrule
\end{tabular}
\end{table}

\section{Additional Ablation Studies}

\subsection{Number of Ricci Flow iterations}
We investigate the impact of using a varying number of Ricci flow iterations on the performance of \curvpool.
In the case of node clustering, we re-run \curvpool~on the Cora and CiteSeer graphs, using 2, 4, and 6 iterations of Ricci flow.
In the results presented in the main text, 4 iterations were used.
\begin{table*}[ht]
\label{tab:node-level}
\footnotesize
\centering
\caption{We report average accuracy (NMI) on Cora and CiteSeer when using varying Ricci flow iterations. The average time per epoch (in seconds) is given in brackets. The reported times (mean/ standard deviation) are computed based on 10 trials.
}
\begin{tabular}{lccc}
\toprule
\textbf{Iterations} & Cora & CiteSeer\\
\midrule
2 & $0.45 \pm 0.04$ ~($0.033 \pm 0.001$) & $0.34 \pm 0.03$ ~($0.028 \pm 0.000$) \\
4 & $0.47 \pm 0.04$  ~($0.035 \pm 0.000$) & $0.35 \pm 0.04$ ~($0.029 \pm 0.001$) \\
6 & $0.44 \pm 0.05$  ~($0.034 \pm 0.001$) & $0.33 \pm 0.02$ ~($0.029 \pm 0.001$) \\
\bottomrule
\end{tabular}
\end{table*}

We find that the NMI is comparable for different numbers of Ricci flow iterations. This indicates that the performance of ~\curvpool~is not very sensitive to the number of iterations used, and that the curvature-adjustment captures crucial multi-scale structure with only a few iterations.

We also investigate the impact of different numbers of iterations of Ricci flow on a graph classification task.
We compared different numbers of iterations for the REDDIT-BINARY data set,  testing 2, 3, and 4 iterations.
In the main results, 3 iterations were used. Our results show comparable results across all experiments, implying that the observations in node-level tasks above extend to this setting.
Based on our results, we expect that the performance of \curvpool ~is not very sensitive to the number of iterations of Ricci flow and computing the curvature-adjustment based on a small number of iterations already leads to significant performance increases. This is consistent with previous findings for curvature-based methods, see, e.g.,~\cite{tian2023curvature}.

\begin{table*}[ht]
\label{tab:node-level}
\footnotesize
\centering
\caption{We report accuracy (NMI) and time per epoch on REDDIT-BINARY when using varying Ricci flow iterations. The reported times (mean/ standard deviation) are computed based on 10 trials.
}
\begin{tabular}{lccc}
\toprule
\textbf{Iterations} & Accuracy & Seconds/epoch \\
\midrule
2 & $0.86 \pm 0.02$ & $11.28 \pm 0.19$ \\
3 & $0.88 \pm 0.02$ & $12.90 \pm 0.17$ \\
4 & $0.87 \pm 0.03$ & $13.72 \pm 0.18$ \\
\bottomrule
\end{tabular}
\end{table*}

\subsection{Choice of Base Layer}
For all of the experiments above, we used standard GCN layers as base layers for each experiment.
Here, we investigate the performance of our model architecture using GAT and GIN instead of GCN.

For node clustering, we re-run our CiteSeer experiment, again comparing~\curvpool~against four baselines. The experimental setup follows the description in Apx.~\ref{apx:exp-node} apart from GIN and GAT replacing GCN. Our results show a high NMI and overall similar performance gains using~\curvpool. For graph classification, we make analogous observations when re-running our PROTEINS experiment with GAT and GIN replacing GCN base layers in the model architecture described in Apx.~\ref{apx:exp-graph}.

\begin{table*}[ht]
\label{tab:node-level}
\footnotesize
\centering
\caption{We report accuracy (NMI) on CiteSeer when using GAT and GIN in place of GCN base layers. The average time per epoch (in seconds) is given in brackets. The reported times (mean/ standard deviation) are computed based on 10 trials.
}
\begin{tabular}{lccccccc}
\toprule
 & GAT & GIN \\
\midrule
\textbf{Diff} & $0.28 \pm 0.031$ $~(0.062 \pm 0.004)$ & $0.29 \pm 0.025$ $~(0.097 \pm 0.003)$ \\
\textbf{Mincut} & $0.32 \pm 0.018$ $~(0.083 \pm 0.002)$ & $0.33 \pm 0.023$ $~(0.118 \pm 0.004)$ \\
\textbf{DMoN} & $0.31 \pm 0.063$ $~(0.077 \pm 0.008)$ & $0.30 \pm 0.070$ $~(0.094 \pm 0.010)$ \\
\textbf{TV} & $0.32 \pm 0.016$ $~(0.086 \pm 0.007)$ & $0.30 \pm 0.026$ $~(0.095 \pm 0.008)$ \\
\textbf{ORC (us)} & $\mathbf{0.34 \pm 0.040}$ $~(0.156 \pm 0.016)$ & $\mathbf{0.35 \pm 0.040}$ $~(0.219 \pm 0.019)$ \\
\bottomrule
\end{tabular}
\end{table*}

\begin{table*}[ht!]
\label{tab:node-level}
\footnotesize
\centering
\caption{We report average classification accuracy for PROTEINS with GAT and GIN replacing GCN base layers. 
We use a $80/10/10$ train/val/test split. Averages are determined based on 10 trials. 
Highest accuracy in bold.}
\begin{tabular}{lccccccc}
\toprule
 & GAT & GIN \\
\midrule
\textbf{Diff} & $0.73 \pm 0.017$ $~(3.82 \pm 0.55)$ & $0.73 \pm 0.022$ $~(4.41 \pm 0.56)$ \\
\textbf{Mincut} & $0.76 \pm 0.019$ $~(4.16 \pm 0.67)$ & $\mathbf{0.77 \pm 0.034}$ $~(4.91 \pm 0.64)$ \\
\textbf{DMoN} & $0.74 \pm 0.030$ $~(4.83 \pm 0.53)$ & $0.75 \pm 0.070$ $~(5.17 \pm 0.71)$ \\
\textbf{TV} & $0.76 \pm 0.028$ $~(4.01 \pm 0.47)$ & $0.76 \pm 0.034$ $~(4.96 \pm 0.048)$ \\
\textbf{ORC (us)} & $\mathbf{0.79 \pm 0.022}$ $~(7.10 \pm 0.63)$ & $\mathbf{0.77 \pm 0.039}$ $~(9.21 \pm 0.73)$ \\
\bottomrule
\end{tabular}
\end{table*}

\subsection{Comparison with \textsc{CurvPool}~\cite{sanders}}
\label{sec:sanders-exp}
We compare the results of \curvpool~and \textsc{CurvPool} on PROTEINS and IMDB-BINARY, using the architecture, hyperparameters, and experimental setup of the \textsc{CurvPool} paper.~\cite{sanders} presents three versions of \textsc{CurvPool} that differ in their pooling scheme. Below, we compare against the highest performing one, \textsc{HighCurvPool}.

\begin{table*}[ht!]
\label{tab:node-level}
\footnotesize
\centering
\caption{We report average accuracy (NMI) for \textsc{Planetoid} for both~\curvpool~and \textsc{CurvPool}. 
}
\begin{tabular}{lccc}
\toprule
\textbf{Layer} & Cora & CiteSeer & PubMed \\
\midrule
\textbf{Curv} & $0.35$ & $0.32$ & $0.15$ \\
\textbf{ORC (us)} & $\mathbf{0.47 \pm 0.04}$ & $\mathbf{0.35 \pm 0.04}$ & $\mathbf{0.24 \pm 0.04}$ \\
\bottomrule
\end{tabular}
\end{table*}
\begin{table*}[ht]
\label{tab:node-level}
\footnotesize
\centering
\caption{We report average classification accuracy for PROTEINS (attributed) and IMDB-BINARY (unattributed) for~\curvpool~and \textsc{CurvPool}. We use 10-fold cross validation. Averages are computed over the test sets for each of the folds.
Highest accuracy in bold.}
\begin{tabular}{lccccccc}
\toprule
 & PROTEINS & IMDB-BINARY \\
\midrule
\textbf{Curv} & $0.77 \pm 0.061$ & $\mathbf{0.71 \pm 0.051}$ \\
\textbf{ORC (us)} & $\mathbf{0.79 \pm 0.053}$ & $\mathbf{0.71 \pm 0.049}$ \\
\bottomrule
\end{tabular}
\end{table*}

For node clustering, we compare the two models on three \textsc{Planetoid} data sets. The \textsc{CurvPool} clustering is deterministic, as it is based only on the edge curvatures of the original graph (hence no variance is reported). We used \textsc{HighCurvPool} with a threshold of $-0.2$. We see that~\curvpool~outperforms \textsc{CurvPool} on all three \textsc{Planetoid} graphs, by an especially large margin for Cora and PubMed.
This is expected, since 
\curvpool~utilizes information encoded in both node attribute and connectivity, whereas \textsc{CurvPool} only utilizes connectivity.

For graph classification, we compared~\curvpool~ and  \textsc{CurvPool} on one attributed (PROTEINS) and one unattributed data set (IMDB-BINARY). We observe that~\curvpool~achieved higher performance than \textsc{CurvPool} on PROTEINS and equal performance on IMDB-BINARY. This illustrates that~\curvpool~is able to utilize crucial information encoded in the node attributes and that this is crucial for the performance gains observed across data sets.

We note that achieving high performance with \textsc{CurvPool} requires careful tuning of the curvature threshold for pooling nodes. As shown in~\cite{sanders}, the choice of this hyperparameter can cause accuracy to change by as much as 5-10 percentage points, depending on the dataset.
Therefore, hyperparameter tuning, e.g., grid search needs to be performed to determine the optimal threshold.
In contrast, the key hyperparameter in~\curvpool is the choice of the number of Ricci flow iterations to which downstream performance is fairly insensitive, as the experiments above indicate.

\section{Results on Open Graph Benchmark}
We include additional experiments on large-scale data sets from the Open Graph Benchmark ~\cite{hu2021open}.
For node clustering, we use OGBN-ARXIV, and for graph classification, we use OGBG-MOLHIV.

Like Cora and CiteSeer, OGBN-ARXIV is a citation network. It has 169,343 nodes and 1,166,243 edges; the goal is the classification of papers by subject area.
OGBG-MOLHIV is a molecular property prediction dataset.
It has 41,127 graphs with an average of 25.5 nodes.
We follow the preferred evaluation metrics for OGB, reporting accuracy for OGBN-ARXIV and ROC-AUC for OGBG-MOLHIV.
When running~\curvpool~on OGBN-ARXIV, 4 iterations of Ricci flow were used. For OGBG-MOLHIV, 2 iterations of Ricci flow were used.

\begin{table*}[ht]
\label{tab:node-level}
\footnotesize
\centering
\caption{We report accuracy and time per epoch on OGBN-ARXIV for all models.
}
\begin{tabular}{lccc}
\toprule
 & Accuracy & Seconds/epoch \\
\midrule
\textbf{Diff} & $0.65 \pm 0.031$ & $0.59 \pm 0.002$ \\
\textbf{Mincut} & $0.70 \pm 0.027$ & $3.26 \pm 0.009$ \\
\textbf{DMoN} & $0.68 \pm 0.028$ & $0.42 \pm 0.001$ \\
\textbf{TV} & $0.68 \pm 0.040$ & $0.41 \pm 0.001$ \\
\textbf{ORC (us)} & $\mathbf{0.72 \pm 0.036}$ & $9.29 \pm 0.011$ \\
\bottomrule
\end{tabular}
\end{table*}

\begin{table*}[ht!]
\label{tab:node-level}
\footnotesize
\centering
\caption{We report ROC-AUC and time per epoch on OGBG-MOLHIV for all models.
}
\begin{tabular}{lccc}
\toprule
 & ROC-AUC & Seconds/epoch \\
\midrule
\textbf{Diff} & $0.68 \pm 0.052$ & $16.87 \pm 1.51$ \\
\textbf{Mincut} & $0.70 \pm 0.040$ & $18.38 \pm 2.43$ \\
\textbf{DMoN} & $\mathbf{0.73 \pm 0.037}$ & $17.44 \pm 0.89$ \\
\textbf{TV} & $0.69 \pm 0.034$ & $17.58 \pm 0.79$ \\
\textbf{ORC (us)} & $\mathbf{0.73 \pm 0.048}$ & $34.90 \pm 2.16$ \\
\bottomrule
\end{tabular}
\end{table*}
We observe that~\curvpool~performs best on OGBN-ARXIV; on OGBN-MOLHIV \textsc{DMoNPool} performs best. As in the experiments above, these results are based off 10 runs.

\section{Details on Theoretical Analysis}
\label{apx:theory}

\subsection{Properties of \curvpool}

\paragraph{Expressivity.}
In the main text we stated the following result on the impact of \curvpool~layers on the expressivity of the GNN:
\begin{cor}[Expressivity of \curvpool]
    Consider a simple architecture with a block of MP base layers, following by a \curvpool~layer. 
    Let $G_1, G_2$ denote two 1-WL-distinguishable graphs with node attributes $X_1, X_2$. Further let $X_1' \neq X_2'$ denote the node representations learned by the block of MP layers. Then the coarsened 
    graphs $G_1^P, G_2^P$ learned by the 
    \curvpool~layer are 1-WL distinguishable.
\end{cor}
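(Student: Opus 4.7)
The plan is to reduce the statement to the sufficient condition of Bianchi and Lachi~\cite{bianchi2023expressive}, which characterizes when a pooling layer preserves the expressive power of the MP block it follows. Informally, their condition requires the Sel, Red, and Con functions of the pooling layer to act injectively on the space of node-embedded input graphs (up to graph isomorphism). Crucially, the architecture considered here and the one analyzed in that paper for \mincutpool~differ only in that the adjacency $A$ used in the selection objective and the connection step is replaced by the curvature-adjusted matrix $C_T$. My strategy is therefore to transport the existing \mincutpool~result to the present setting by verifying that this replacement does not break any of the required conditions.

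First, by hypothesis the MP block produces distinct node representations $X_1' \neq X_2'$, so the pooling layer receives distinguishing input. The Sel function of \curvpool~is an MLP with softmax activation on $\tilde X$, architecturally identical to that of \mincutpool; only its training objective differs (Eq.~\ref{eq:curvpool-loss}). For generic parameters $\psi$, this map is injective on node-embedding multisets, so $S_1 \neq S_2$ up to the permutation symmetry inherited from the inputs. The Red function $X^P = S^T X$ and the Con function $A^P = S^T C_T S$ are the standard aggregation maps used in \mincutpool, with $A$ replaced by $C_T$. Hence the only substantive check is that this substitution does not collapse distinguishable graphs.

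For this last step, I would observe that the curvature-adjustment is itself a graph invariant: the edge weights $w^T_{uv}$ produced by Eq.~\ref{eq:ricci-flow} depend on the input graph only through $\kappa_{uv}$ and $d_G(u,v)$, both of which are permutation-equivariant functions of $G$. Consequently the map $G \mapsto C_T(G)$ commutes with graph isomorphism and cannot identify two 1-WL-distinguishable inputs. Combined with the injectivity of Sel, this yields $G_1^P \neq G_2^P$, and the corollary follows by invoking the Bianchi--Lachi sufficient condition. The main obstacle I anticipate is formally ruling out accidental coincidences arising under Ricci flow, i.e., non-isomorphic graphs whose coarsened adjacencies happen to coincide after the flow. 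I would resolve this with a genericity argument: the flow is a continuous, equivariant update on a finite-dimensional weight space, and the injective regime of the Sel-MLP is dense, so such coincidences form a measure-zero subset and are avoided for almost every $\psi$.
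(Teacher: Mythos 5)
You correctly identify the external result to invoke --- Theorem~1 of \cite{bianchi2023expressive} --- which is exactly what the paper does. However, you mischaracterize what that theorem requires, and the argument you build on your (incorrect) reading has a genuine gap. The Bianchi et al.\ conditions are not an injectivity requirement on \textsc{Sel}, \textsc{Red}, \textsc{Con}; they are three concrete, checkable properties: (1) the MP block maps the two WL-distinguishable graphs to node-feature multisets with \emph{different sums}, $\sum_{x \in X_1'} x \neq \sum_{\tilde x \in X_2'} \tilde x$; (2) the selection assigns each node to exactly one supernode, $\sum_{j=1}^K s_i^j = 1$; and (3) the reduction is sum-aggregation, $x_j^P = \sum_i x_i' s_i^j$. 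The reason these suffice is that sum-aggregation over a partition preserves the total feature sum, so distinct sums before pooling imply distinct sums (hence 1-WL distinguishability) after pooling. The paper's proof simply verifies (2) and (3) for the \curvpool{} layer --- the softmax-constrained $S$ and the reduction $X^P = S^T X'$ have the same functional form as in \mincutpool, and the curvature adjustment only changes the training \emph{loss}, not these maps --- and notes that (1) is a property of the base layer (satisfied by, e.g., GIN), independent of the pooling operator.

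Your substitute argument does not close the implication you need. Generic injectivity of the \textsc{Sel}-MLP is both doubtful (a softmax map into the $K$-simplex with $K \ll N$ is not injective in any sense that matters here) and insufficient: even granting $S_1 \neq S_2$, it does not follow that $G_1^P$ and $G_2^P$ are \emph{1-WL distinguishable} --- two unequal pooled graphs can still be indistinguishable to 1-WL, which is the whole point of the corollary. The measure-zero/genericity step is therefore doing no real work, and the observation that $G \mapsto C_T(G)$ is isomorphism-equivariant, while true (and relevant to the separate permutation-invariance lemma), is not what drives the expressivity result. The missing idea is the sum-preservation mechanism: once you state conditions (1)--(3) correctly, the proof reduces to checking that \curvpool{} inherits the \mincutpool{} structure of $S$ and $S^T X'$ unchanged, with no genericity assumption on $\psi$ required.
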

\begin{proof}
This result is a simple adaptation of a recent result by~\cite[Thm.1]{bianchi2023expressive}, which establishes conditions under which pooling layers preserve expressivity:
\begin{thm}[~\cite{bianchi2023expressive}, Thm. 1]
    Let $G=\{X,E \}$ ($X \in \mathbb{R}^{N \times m}$) denote the input graph and $G'$ the graph obtained after applying a block of MP base layers to $G$; $X'$ denoting the new multiset of node features. Let $\textsc{Pool}: \; G' \rightarrow G^P$ denote an SRC pooling layer after the MP layers, which produces a pooled graph $G^P$ with multi-sets $X^P$. \textsc{Pool} preserves the expressivity of the MP layers, provided that the following conditions hold:
    \begin{enumerate}
        \item Let $G_1, G_2$ denote two WL-distinguishable graphs and $X_1',X_2'$ the node representations learnt by the MP layers. Then $\sum_{x \in X_1'} x\neq \sum_{\tilde{x} \in X_2'} \tilde{x}$.
        \item The selection function assigns nodes to a unqiue supernode; i.e., $\sum_{j=1}^K s_i^j=1$ for all $i \in [N]$.
        \item The reduction function assigns supernode representations as $x_j^P=\sum_{i=1}^N x_i' s_i^j$.
    \end{enumerate}
\end{thm}
The authors show that conditions (2),(3) hold for dense pooling layers, such as \mincutpool and, hence, also \curvpool. In particular, by construction, graph cuts or partition-based community detection algorithms produce non-overlapping communities; hence the selection function assigns nodes to a unique cluster, which becomes a supernode. The reduction function computes supernode attributes as $X^P=S^T X'$, which aligns with condition (3). Condition (1) is independent of the choice of pooling operator and is fulfilled for any MP layer that is as powerful as the 1-WL test, e.g., GIN~\cite{xu2018powerful}.
\end{proof}

\paragraph{Permutation-invariance.}
In the main paper, we stated the following property:
\begin{lem}
   \curvpool~is permutation-invariant. 
\end{lem}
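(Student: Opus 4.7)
The plan is to show that each of the three constituent functions of the \curvpool~operator (\textsc{Sel}, \textsc{Red}, \textsc{Con}) is equivariant with respect to a node relabeling, so that the resulting coarsened graph $G^P$ is uniquely determined by $G$ up to isomorphism. Throughout, let $P\in\{0,1\}^{N\times N}$ denote a permutation matrix acting on the node set and write $G^\sigma=(PAP^T,PX)$ for the relabeled graph, where $A$ is the adjacency matrix and $X$ the node attribute matrix of $G$.

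First I would verify that the curvature-adjusted adjacency matrix $C_T$ transforms equivariantly under $P$, i.e., $C_T(G^\sigma)=PC_T(G)P^T$. Discrete Ollivier-Ricci curvature $\kappa_{uv}$ defined in Eq.~\ref{eq:ORC-dyn} depends only on the local neighborhood structure around the edge $(u,v)$ (through the measures $p_u,p_v$, the edge weight $w_{uv}$, and the shortest-path distances), all of which are invariants of the labeled graph. A one-line argument then shows that ORC is equivariant: $\kappa_{\sigma(u)\sigma(v)}(G^\sigma)=\kappa_{uv}(G)$. Because the Ricci flow update~\eqref{eq:ricci-flow} is a node-wise function of curvature and shortest-path distance (both equivariant), induction on the number of flow iterations yields $C_T(G^\sigma)=PC_T(G)P^T$, and similarly $\hat{D}(G^\sigma)=P\hat{D}(G)P^T$.

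Next I would show that the selection function is permutation-equivariant up to a free relabeling of supernodes. Since the MP base layers are permutation-equivariant, the embeddings satisfy $\tilde{X}(G^\sigma)=P\tilde{X}(G)$. The assignment MLP is applied row-wise, so $S(G^\sigma)=PS(G)Q$ for some permutation $Q$ of the $K$ supernode labels (this $Q$-freedom reflects the symmetry of the supernode indexing and is inherent to any clustering). A direct substitution into the \curvpool~loss~\eqref{eq:curvpool-loss} shows that both numerator and denominator are invariant under $S\mapsto PSQ$ thanks to the trace and the equivariance of $\hat{C}_T,\hat{D}$; likewise, the orthogonality regularizer $\|S^TS/\|S^TS\|_F - I_K/\sqrt{K}\|_F$ is invariant. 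Hence the set of optimal assignments is closed under the $(P,Q)$-action.

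Finally, I would check that \textsc{Red} and \textsc{Con} commute with this action. Substituting the transformed assignment into the reduction function gives $S(G^\sigma)^TX(G^\sigma)=Q^TS^TP^T\cdot PX=Q^T(S^TX)$, and into the connection function gives $S(G^\sigma)^T A(G^\sigma) S(G^\sigma)=Q^T S^T A S Q$; both amount to a relabeling of supernodes. Consequently, the coarsened graph $G^P$ is the same up to isomorphism for $G$ and $G^\sigma$, which is precisely permutation-invariance.

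The only delicate point is the $Q$-freedom in the supernode indexing: a strict equivariance statement would require a canonical ordering of supernodes, which does not exist. I would handle this by adopting the standard convention (common across the pooling literature) of identifying pooled graphs up to relabeling of supernodes, under which the argument above yields permutation-invariance as stated.
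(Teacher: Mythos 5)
Your proof is correct and follows essentially the same route as the paper's: decompose \curvpool~via the SRC framework, observe that the curvature adjustment is merely an equivariant re-weighting of edges so that \textsc{Sel} remains permutation-equivariant, and check that \textsc{Red} and \textsc{Con} respect the node permutation. You are considerably more explicit than the paper's one-paragraph argument --- in particular you spell out the equivariance of ORC and the Ricci flow by induction on iterations, and you flag the supernode-relabeling freedom $Q$ that the paper silently absorbs --- but the underlying argument is the same.
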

This property is inherited from \mincutpool.
For completeness, we provide a the argument below.
\begin{proof}
    Again, we utilize the SRC framework to describe the \curvpool~operator. The seletion function $\textsc{Sel}$ computes a cluster assignment matrix $S$. It is easy to see that the graph cut objective (Eq. (3.5)) is permutation-equivariant. The permutation-equivariance is not impacted by the curvature-adjustment, as it simply performs a re-weighting of the edges. It is further easy to see that the reduction and connection functions (\textsc{Red} and \textsc{Con}) are permutation-invariant. With that, the composition $(\textsc{Sel} \circ \textsc{Red}) \circ \textsc{Con}$ is permutation-invariant, as desired.
\end{proof}

\paragraph{Pooling and Over-squashing.}
As the number of layers in an MPGNN increases, one observes the formation of bottlenecks~\cite{alon2021on}: Information from far distant nodes is encoded into fixed-length node representations during message passing, leading to information loss. This effect is particularly strong for bridges between clusters, which connect dissimilar nodes whose neighborhoods have a small intersection. 
Previous literature has characterized oversquashing via discrete Ricci curvature~\cite{nguyen2022revisiting,fesser2023mitigating}: Edges that connect nodes in different clusters (\emph{bridges}), have low Ricci curvature. This effect is emphasized under ORC flow, as the weight of the bridges increases. With that, our proposed curvature adjustment may amplify oversquashing effects. Recent literature has proposed \emph{graph rewiring} as a mitigation for oversquashing effects. During rewiring, edges are re-sampled proportional to a relevance score (edge weight), which may be assigned 
utilizing the Lovasz bound~\cite{diffwire}, random edge dropping~\cite{Rong2020DropEdge} or discrete curvature~\cite{nguyen2022revisiting,fesser2023mitigating}, among others. 
Since we already compute discrete curvature  during pooling, we could add a rewiring step with little overhead. We leave an investigation of this approach for future work.

\subsection{Structural Impact of Curvature Adjustment}
\begin{wrapfigure}{R}{0.2\linewidth}
\centering
    \includegraphics[width=2cm]{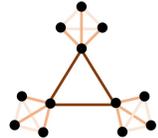}
    \caption{$G_{a,b}$ ($a=b=3$)}
    \label{fig:Gab}
    \vspace{-10pt}
\end{wrapfigure}
In the main text, we have given a brief argument for how coarsening approaches based on \curvpool~preserve the structural integrity of the graph. In this section, we expand on these results.
To corroborate our claim that ORC flow reveals coarse geometry and emphasizes the community structure, we employ a graph model, which exhibits community structure as found in real data:
\begin{defn}\label{def:G-ab}
Consider a class of model graphs $G_{a,b}$ ($a > b > 2$), which are constructed by taking a complete graph with $b$ nodes and replacing each by a complete graph with $a+1$ nodes. 
\end{defn}
Graphs of the form $G_{a,b}$ are special cases of stochastic block models~\cite{abbe} with $b$ blocks (\emph{communities}) of size $a+1$ (see Fig.~\ref{fig:Gab} for an example). Due to the regularity of the graph, we can categorize its edges into three types: (1) bridges, which connect dissimilar nodes in different clusters; (2) internal edges, which connect similar nodes in the same cluster that are at most one hop removed from a dissimilar node; and (3) all other internal edges, which connect similar nodes within the same cluster.~\cite{ni2019community} derived an evolution equation for edge weights under ORC flow for $G_{a,b}$ graphs:
\begin{lem}[\cite{ni2019community}]\label{lem:flow-evolve}
    Let $w^t=[w_1^t, w_2^t, w_3^t]$ denote the weights of edges of types (1)-(3). Assuming initialization $w^0=[1, 1, 1]$, the edge weights evolve under ORC flow $w_{u,v}^{t+1} \gets (1-\kappa_{uv} d_G(u,v))w_{u,v}^{t}$ as 
    \begin{equation}\label{eq:ric-evolution}
        w^{t+1}= \underbrace{\begin{pmatrix}
            \frac{a-1}{a+b} & \frac{2a}{a+b} & 0 \\
            \frac{b}{a+b} & \frac{ab-a-b}{a(a+b)} & \frac{1}{a+b} \\
            0 & 0 & \frac{1}{a}
        \end{pmatrix}}_{=: F(a,b)} w^t \; .
    \end{equation}
\end{lem}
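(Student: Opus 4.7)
The plan is to exploit the high symmetry of $G_{a,b}$ to reduce the evolution to a three-dimensional linear recursion, then compute the curvature of each edge type explicitly via optimal transport. First I would observe that the automorphism group of $G_{a,b}$ acts transitively on each of the three edge classes described in the text: permuting clusters, and permuting non-bridge-incident vertices within each cluster, preserves the graph and thus the Ollivier curvature and the Ricci flow update. Since $\kappa$ and the update are automorphism-equivariant, any weighting that is constant on each type at time $t$ remains constant on each type at time $t+1$. Starting from $w^0=[1,1,1]$, this invariance legitimizes the parameterization $w^t=[w_1^t,w_2^t,w_3^t]$ and reduces the task to computing three scalar updates.

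Next I would compute $\kappa_{uv}^t$ for one representative edge of each type, using the first-order measure $p_v \propto b_{v\cdot}$ from section~\ref{sec:curv}. For each type, I would enumerate the partition of $N(u)\cup N(v)$ into common neighbors, $u$-private neighbors, and $v$-private neighbors, tagging each private neighbor with the type of its incident edge. The mass $p_u$ and $p_v$ then become weighted sums of $\delta$-masses whose weights are determined by $w_1^t,w_2^t,w_3^t$ and the cluster combinatorics (counts involving $a$ and $b$). Solving the Wasserstein-1 problem reduces to transporting $u$-private mass to $v$-private mass along shortest paths, since by symmetry the identity coupling on common neighbors is optimal. The path lengths themselves are linear combinations $w_i^t+w_j^t$, so $W_1(p_u,p_v)$ is linear in $w^t$.

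Finally I would plug into the flow update $w_{uv}^{t+1}=(1-\kappa_{uv}d_G(u,v))w_{uv}^t$, noting that $d_G(u,v)=w_{uv}^t$ (edges are their own shortest paths here because all alternative routes are longer). Collecting coefficients of $w_1^t, w_2^t, w_3^t$ in each of the three updates yields three rows of coefficients, which I would then check match the rows of $F(a,b)$ in \eqref{eq:ric-evolution}. The identities $\tfrac{a-1}{a+b}+\tfrac{2a}{a+b}=\tfrac{3a-1}{a+b}$ and the row-sum checks across $F(a,b)$ provide convenient sanity tests.

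The main obstacle is verifying optimality of the proposed transport plans, especially for the type~(2) edges, where one endpoint is incident to a bridge and the other is not, so the natural "match common neighbors, move private mass along the shortest private path'' coupling is no longer forced by symmetry. Here I would either exhibit a matching Kantorovich potential $\varphi$ satisfying $\varphi(x)-\varphi(y)\le d_G(x,y)$ with equality on the support of the candidate coupling, or invoke the cyclic monotonicity characterization to rule out profitable swaps. The remaining bookkeeping for types~(1) and~(3) is routine once the neighborhood counts have been laid out, but must be executed carefully to correctly separate contributions of $w_1^t,w_2^t,w_3^t$ into the entries of $F(a,b)$.
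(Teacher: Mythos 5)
First, a framing note: the paper does not actually prove this lemma --- it is imported verbatim from Ni et al.\ --- so your derivation is being checked against the intended argument rather than a written one. Your overall architecture is the right one: use the fact that the automorphism group of $G_{a,b}$ acts transitively on each of the three edge classes to reduce the flow to a three-dimensional linear recursion, compute $W_1(p_u,p_v)$ for one representative edge per class, and read off $w^{t+1}_{uv}=(1-\kappa_{uv})\,d_G(u,v)=W_1(p_u,p_v)$ using $d_G(u,v)=w^t_{uv}$. Your insistence on certifying optimality of the candidate couplings (Kantorovich potentials or cyclic monotonicity), rather than assuming the ``match common neighbors, ship private mass'' plan is optimal, is also well placed.

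The genuine gap is in the step where you claim $W_1(p_u,p_v)$ is linear in $w^t$. You describe $p_u,p_v$ as sums of $\delta$-masses ``whose weights are determined by $w_1^t,w_2^t,w_3^t$'', i.e.\ the weight-dependent measure $p_v(z)\propto e^{-w_{vz}}$ of sec.~\ref{sec:curv}. With that measure the transport problem has both $w^t$-dependent masses and $w^t$-dependent costs, so $W_1$ is \emph{not} linear in $w^t$ and no fixed matrix $F(a,b)$ can govern the evolution: concretely, for a type-(3) edge the update would be $w_3^{t+1}=\frac{e^{-w_3^t}}{e^{-w_2^t}+(a-1)e^{-w_3^t}}\,w_3^t$, which equals $\frac{1}{a}w_3^t$ only when $w_2^t=w_3^t$. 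The recursion in the lemma requires the neighborhood measures to be \emph{uniform} on $N(\cdot)$, independent of the evolving weights, so that all of the $w^t$-dependence sits in the (linear) shortest-path costs; you must commit to that convention explicitly. Two further points need attention. First, for $F$ to be time-independent you must show that the combinatorial shortest paths and the combinatorial structure of the optimal couplings are stable along the \emph{entire} flow, not just at a fixed $t$: since $w_1^t,w_2^t$ grow like $\lambda_1^t$ while $w_3^t$ decays like $a^{-t}$, the cost ratios change drastically, and an optimality certificate valid at $w^0=[1,1,1]$ does not automatically persist. Second, your proposed sanity test $\frac{a-1}{a+b}+\frac{2a}{a+b}=\frac{3a-1}{a+b}$ is a tautology and checks nothing; the meaningful check is to compute $W_1$ directly at $t=0$ and compare with $F(a,b)\mathbf{1}$ --- for the third row this correctly returns $\frac{1}{a}$, and for the first two rows it would force you to pin down the degree conventions (how many bridges each hub carries) that produce the $a+b$ denominators.
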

Building on this characterization,~\cite{ni2019community} show the follow result, which expresses edge weights $w^t$ with respect to the eigenvalues of the matrix $F_{a,b}$:
\begin{lem}[\cite{ni2019community}, adapted to our assumptions]\label{lem:Fab-eval}
    The matrix $F_{a,b}$ has three real eigenvalues, $\lambda_1, \lambda_2,\lambda_3$, for which $\lambda_1 > 1$, $\lambda_2=\frac{1}{a}$ and $\lambda_3 < 0$, and corresponding eigenvectors $w_1,w_2,w_3$. Then one can show
    \begin{equation*}
        w^t=a_1 \lambda_1^t w_1 + a_2 \lambda_2^t w_2 + a_3 \lambda_3^t w_3 = \begin{pmatrix}
            a_1 \lambda_1^t + o(\lambda_1^t) \\
            k a_1 \lambda_1^t + o(\lambda_1^t) \\
            \big(\frac{1}{a} \big)^t
        \end{pmatrix} \; ,
    \end{equation*}
    where $k \in (0,1)$ and $\lim_{t \rightarrow \infty} \frac{o(\lambda_1^t)}{\lambda_1^t}=0$.
\end{lem}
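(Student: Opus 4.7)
The plan exploits the block upper-triangular structure of $F(a,b)$: its third row is $(0,0,1/a)$, which immediately identifies $\lambda_2 := 1/a$ as an eigenvalue and reduces the remaining spectral analysis to the $2\times 2$ top-left block $M$. From there the argument becomes algebraic manipulation of the characteristic polynomial $p_M(\lambda) = \lambda^2 - \mathrm{tr}(M)\lambda + \det(M)$.

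First, I compute $\det(M) = \frac{-a^2 b - a^2 - 2ab + a + b}{a(a+b)^2}$; its numerator is negative for $a,b \ge 2$ (dominated by $-a^2 b$), so $M$ has two real eigenvalues of opposite signs, labelled $\lambda_1 > 0 > \lambda_3$. To strengthen this to $\lambda_1 > 1$, I evaluate $p_M(1)\cdot a(a+b)^2 = a^2(1-b) + ab + b^2 + a + b$ and verify that this quantity is negative under $a>b>2$ (equivalently, $a^2(b-1) > (a+b)(b+1)$, a routine estimate). Since $p_M$ opens upward and has a negative root, $p_M(1) < 0$ forces $1$ to lie strictly between the two roots, giving $\lambda_1 > 1$. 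A direct check that $\mathrm{tr}(M) = \frac{a^2+ab-2a-b}{a(a+b)} > 0$ (write the numerator as $a(a-2)+b(a-1)$) then yields $\lambda_1 > |\lambda_3|$, so $\lambda_1$ is strictly dominant.

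For the asymptotic expansion, I use block triangularity to describe the eigenvectors of $F(a,b)$. Any eigenvector for $\lambda_1$ or $\lambda_3$ has zero third coordinate (the corresponding $\lambda$ is not $1/a$), while the eigenvector for $\lambda_2 = 1/a$ can be normalized to have third coordinate $1$, namely $w_2 = (\hat w_2,1)^T$ with $\hat w_2 = -(M-(1/a)I)^{-1}(0,1/(a+b))^T$ (the inverse exists since $1/a \neq \lambda_1,\lambda_3$). Decomposing the initial condition $w^0 = (1,1,1)^T = a_1 w_1 + a_2 w_2 + a_3 w_3$ along this eigenbasis, the third-coordinate equation immediately gives $a_2 = 1$, hence $w^t_3 = (1/a)^t$ exactly; for the first two coordinates, the bound $|\lambda_2|,|\lambda_3|<\lambda_1$ makes the $\lambda_2^t$ and $\lambda_3^t$ terms $o(\lambda_1^t)$. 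Normalizing the $\lambda_1$-eigenvector of $M$ as $(1,k)^T$ produces the first two claimed entries.

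To finish, I verify $k\in(0,1)$: from $M(1,k)^T = \lambda_1(1,k)^T$ the first row gives $k = \frac{(a+b)\lambda_1 - (a-1)}{2a}$, and $k > 0$ follows at once from $\lambda_1 > 1 > \frac{a-1}{a+b}$. For $k < 1$ it suffices to show $p_M\!\left(\tfrac{3a-1}{a+b}\right) > 0$; a direct substitution (clearing denominators by $a(a+b)^2$) reduces this to $2a(3a^2 - 2ab + b) > 0$, which is clear for $a > b$ since $3a^2 - 2ab = a(3a-2b) > 0$. The only real obstacle throughout is the algebraic bookkeeping, particularly the identity $a(a+b)^2 \, p_M\!\left(\tfrac{3a-1}{a+b}\right) = 2a(3a^2 - 2ab + b)$, which needs care but is mechanical.
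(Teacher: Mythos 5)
The paper offers no proof of this lemma at all: it is imported from Ni et al.~\cite{ni2019community} with the remark ``adapted to our assumptions,'' so your self-contained derivation supplies strictly more than the paper does, and it checks out. I verified the key computations: $\det(M)=\frac{-a^2b-a^2-2ab+a+b}{a(a+b)^2}<0$, $a(a+b)^2\,p_M(1)=a^2(1-b)+ab+b^2+a+b$, and $a(a+b)^2\,p_M\!\left(\frac{3a-1}{a+b}\right)=2a\left(3a^2-2ab+b\right)$ are all correct, and the block-triangular reduction (the third row $(0,0,1/a)$ forces the third coordinate of the $\lambda_1$- and $\lambda_3$-eigenvectors to vanish and gives $a_2=1$ from the third coordinate of $w^0$) is exactly the right mechanism for getting $w_3^t=(1/a)^t$ exactly while the first two coordinates are governed by the Perron root of $M$. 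Your argument also makes explicit something the paper leaves implicit: the inequality $a^2(b-1)>(a+b)(b+1)$ needed for $\lambda_1>1$ genuinely requires the appendix's hypothesis $a>b>2$ (it fails for, e.g., $a=b=3$, where $\lambda_1\approx 0.96<1$), which is presumably what ``adapted to our assumptions'' refers to.

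The one point you should still close is $a_1>0$ (or at least $a_1\neq 0$): without it the expansion $a_1\lambda_1^t+o(\lambda_1^t)$ is vacuous, and the downstream Thm.~\ref{thm:flow}, which divides by sums whose leading term is $a_1\lambda_1^t$, would collapse. This follows from positivity with one extra line: $F(a,b)$ is entrywise nonnegative and $M$ is entrywise positive for $a>b>2$, so pairing the first two coordinates of the recursion with the positive left Perron eigenvector $u_1$ of $M$ gives $u_1^T(w_1^{t+1},w_2^{t+1})^T\geq \lambda_1\,u_1^T(w_1^{t},w_2^{t})^T$, hence $u_1^T(w_1^{t},w_2^{t})^T\geq \lambda_1^t\,u_1^T(1,1)^T>0$, which forces the $\lambda_1$-component of the initial condition to be strictly positive. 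This is an addendum rather than a flaw in your approach.
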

Utilizing this result we want to analyze the strength of the community structure subject to curvature-adjustment, employing modularity~\cite{gomez} as a measure:
\begin{equation}
    Q(W) := \frac{1}{\sum_{uv} w_{uv}} \sum_{uv} \left( 
        w_{uv} - \frac{d_u d_v}{\sum_{uv} w_{uv}}
    \right) \delta(C_u,C_v) \; .
\end{equation}
Here, $W=(w_{ij})$ denotes a weighted adjacency matrix, $d_v$ weighted node degrees and $\delta(C_u,C_v)=1$, if $u,v$ are in the same cluster and zero otherwise.
It can be shown that a higher modularity corresponds to a ``stronger'' separation of the graph into subgraphs, which is easier to detect; e.g., the corresponding min-cut problem is easier to solve.
We show that modularity increases, as edge weights evolve under ORC flow:
\begin{thm}\label{thm:flow}
    Let $W$ denote the adjacency matrix of the input graph and $C_t$ the curvature-adjusted adjacency matrix after $t$ iterations of ORC flow; i.e., a matrix whose entries correspond to $w_j^t$ for edges of type $j$.
    $Q(C_t)$ increases asymptotically under the Ricci flow.
\end{thm}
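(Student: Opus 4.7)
The plan is to reduce the modularity $Q(C_t)$ to a scalar function of the three weight variables $(w_1^t, w_2^t, w_3^t)$ using the regular structure of $G_{a,b}$, and then invoke Lemma~\ref{lem:Fab-eval} to pass to the limit. Because the $b$ clusters of $G_{a,b}$ are mutually isomorphic and every bridge-endpoint (respectively non-bridge-endpoint) vertex has the same weighted degree at each time $t$, every cluster has the same weighted degree sum $D_C^t$. The standard Newman modularity therefore collapses to
\begin{equation*}
Q(C_t) \;=\; \frac{2\,W_C^t}{D_C^t} \;-\; \frac{1}{b},
\end{equation*}
where $W_C^t$ and $D_C^t$ are affine functions of the weight triple with coefficients determined entirely by the per-cluster counts of type-2 and type-3 edges derived from Definition~\ref{def:G-ab}.

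Next I would substitute the asymptotic expansion of Lemma~\ref{lem:Fab-eval}: $w_1^t = a_1 \lambda_1^t + o(\lambda_1^t)$, $w_2^t = k a_1 \lambda_1^t + o(\lambda_1^t)$, $w_3^t = (1/a)^t$, with $\lambda_1 > 1$ and $k \in (0,1)$. Since $\lambda_1 > 1 > 1/a$, the $w_3^t$ contribution is of strictly lower order in both $W_C^t$ and $D_C^t$. Factoring $a_1 \lambda_1^t$ out of numerator and denominator then yields the clean limit
\begin{equation*}
\lim_{t \to \infty} Q(C_t) \;=\; \frac{k(2a-b+2)}{1 + k(2a-b+2)} \;-\; \frac{1}{b}.
\end{equation*}
Comparing with $Q(C_0) = \frac{a(a+1)}{a(a+1) + (b-1)} - \frac{1}{b}$ reduces the claim to an algebraic inequality on $k$, which I would establish via the characteristic polynomial of the $2 \times 2$ leading block of $F(a,b)$ by writing $k = (\lambda_1 - F_{11})/F_{12}$ and substituting the explicit entries from Lemma~\ref{lem:flow-evolve}.

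To upgrade the asymptotic comparison to an eventual monotonicity statement for $t \mapsto Q(C_t)$, I would linearize the dynamics around the Perron mode: write $w^t = a_1 \lambda_1^t (1, k, 0) + \varepsilon^t$, where $\|\varepsilon^t\|/\lambda_1^t \to 0$ at rate $\max(|\lambda_3|, 1/a)/\lambda_1 < 1$. A first-order expansion of the ratio $2 W_C^t / D_C^t$ about the asymptotic state then pins down the sign of $Q(C_{t+1}) - Q(C_t)$ for all $t$ sufficiently large, delivering the asserted asymptotic increase.

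The main obstacle I anticipate is the comparison step: the Perron coordinate $k$ is defined only implicitly through a $2 \times 2$ eigenvalue problem, and the resulting inequality in $a, b, k$ is not algebraically transparent. Resolving it will likely require careful manipulation of the characteristic polynomial of $F(a,b)$, possibly combined with Perron--Frobenius positivity, to extract a closed-form lower bound on $k$ that makes the inequality explicit. A possible shortcut is to parametrize the trajectory $w^t$ directly in the positive cone and show that the ratio $W_C^t / D_C^t$ is monotonic in $t$ along the flow, bypassing the need for a sharp estimate of $k$.
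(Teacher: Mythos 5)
Your core strategy coincides with the paper's: exploit the regularity of $G_{a,b}$ to write the modularity as a ratio of linear combinations of $(w_1^t,w_2^t,w_3^t)$, then substitute the spectral asymptotics of $F(a,b)$ from Lemma~\ref{lem:Fab-eval}. In fact your symmetry reduction $Q(C_t)=\tfrac{2W_C^t}{D_C^t}-\tfrac1b$ is cleaner than the paper's term-by-term computation: it shows the degree-product correction is exactly the constant $-\tfrac1b$ for all $t$, so the whole theorem reduces to showing that $\tfrac{(b-1)w_1^t}{2a\,w_2^t+a(a-1)w_3^t}$ (which is precisely the paper's ratio $\tfrac{\Sigma_1^t+\Sigma_3^t}{\Sigma_2^t}$ per cluster) is eventually decreasing. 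Your final "shortcut" and your step (c) are exactly the paper's argument: to leading order this ratio equals $\tfrac{b-1}{2ak}$ plus a term decaying like $(1/(a\lambda_1))^{t}$, hence it decreases and $Q(C_t)$ increases toward its limit.

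The genuine problem is your middle step. First, the stated limit is miscomputed: with $W_C^t=a w_2^t+\tfrac{a(a-1)}{2}w_3^t$ and $D_C^t=2W_C^t+(b-1)w_1^t$ one gets $\lim_t Q(C_t)=\tfrac{2ak}{2ak+b-1}-\tfrac1b$, not $\tfrac{k(2a-b+2)}{1+k(2a-b+2)}-\tfrac1b$. Second, and more seriously, the comparison with $Q(C_0)=\tfrac{a(a+1)}{a(a+1)+b-1}-\tfrac1b$ goes the wrong way: since $x\mapsto\tfrac{x}{x+b-1}$ is increasing, the inequality $\lim_t Q(C_t)>Q(C_0)$ would require $2ak>a(a+1)$, i.e.\ $k>\tfrac{a+1}{2}\ge\tfrac32$, which contradicts $k\in(0,1)$ from Lemma~\ref{lem:Fab-eval}. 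The limit is in fact \emph{below} the initial modularity, because the type-(3) edges (the bulk of the intra-cluster weight) decay like $(1/a)^t$ while bridges grow like $\lambda_1^t$. So the "algebraic inequality on $k$" you flag as the main obstacle is not merely hard, it is false; the theorem asserts only eventual monotone increase toward the asymptotic value, not an improvement over $Q(C_0)$. Dropping step (b) entirely and keeping your reduction plus the monotonicity of the ratio along the Perron mode gives exactly the paper's proof.
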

\begin{proof}
    By construction, a $G_{a,b}$ graph has $\frac{b(b-1)}{2}$ type (1), $ab$ type (2) and $\frac{a(a-1)}{2}b$ type (3) edges. Using Lemma~\ref{lem:flow-evolve}, we have in iteration $t$ (up to $\frac{o(\lambda_1^t)}{\lambda_1^t}$ error):
    \begin{align*}
        \sum_{uv} w_{uv}^t &\approx \frac{b(b-1)}{2} w_1^t + ab w_2^t + \frac{a(a-1)}{2}b w_3^t 
        = \frac{b(b-1)}{2} a_1 \lambda_1^t + ab k a_1 \lambda_1^t + \frac{a(a-1)}{2}b \Big( \frac{1}{a}\Big)^t =: \Sigma^t \; ,
    \end{align*}
    as well as degrees
    \begin{align*}
        d_b^t &= (b-1) w_1^t + a w_2^t = (b-1) a_1 \lambda_1^t + a k a_1 \lambda_1^t \\
        d_i^t &= w_2^t + a w_3^t = k a_1 \lambda_1^t + \Big( \frac{1}{a} \Big)^{t-1} 
    \end{align*}
 for internal nodes (subscript $i$) and bridge nodes (subscript $b$), the latter being adjacent to edges of type 1.
 We can write the modularity of a $G_{a,b}$ graph with weights evolved after $t$ iterations of Ricci flow as  
  \begin{align*}
      Q(C_t) &\approx \frac{1}{\Sigma^t} \Biggl( \frac{b(b-1)}{2} \Big( w_1^t - \frac{\big(d_b^t \big)^2}{\Sigma^t} \Big) \underbrace{\delta(C_u, C_v)}_{=0} + ab \Big( w_2^t -    \frac{d_i^t d_b^t}{\Sigma^t} \Big) \underbrace{\delta(C_u, C_v)}_{=1}  \\
      &\qquad + \frac{a(a-1)}{2}b \Big( w_3^t - \frac{\big(d_i^t \big)^2}{\Sigma^t} \Big) \underbrace{\delta(C_u, C_v)}_{=1} \Biggr) \\
      &= \frac{1}{\Sigma^t} \Biggl( ab \Big( w_2^t -    \frac{d_i^t d_b^t}{\Sigma^t} \Big)   + \frac{a(a-1)}{2}b \Big( w_3^t - \frac{\big(d_i^t \big)^2}{\Sigma^t} \Big) \Biggr) \; .
  \end{align*}
Notice that as $t$ increases, we have $d_i^t d_b^t \ll \Sigma^t$ and $\big(d_i^t\big)^2 \ll \Sigma^t$, which implies that the terms
$\Big(\frac{d_i^t d_b^t}{\Sigma^t} \Big)$ and $\Big(\frac{\big(d_i^t\big)^2}{\Sigma^t} \Big)$ decrease fast with $t$. Moreover, $w_3^t=\Big( \frac{1}{a} \Big)^t$ is decreasing fast in $t$. Hence, the asymptotics of the sum in brackets are dominated 
by the term $ab w_2^t$. We introduce the shorthands $\Sigma_1^t, \Sigma_2^t, \Sigma_3^t$ for the contributions of edges of types (1)-(3) to $\Sigma^t$. With that the asymptotics of $Q(C_t)$ are dominated by 
\begin{align*}
    \frac{\Sigma_2^t}{\Sigma_1^t + \Sigma_2^t + \Sigma_3^t} = \frac{1}{1+\frac{\Sigma_1^t + \Sigma_3^t}{\Sigma_2^t}} \; .
\end{align*}
We see that
\begin{align*}
    \frac{\Sigma_1^t + \Sigma_3^t}{\Sigma_2^t} = \frac{\frac{b(b-1)}{2}\lambda_1^t + \big( \frac{1}{a} \big)^{t-1}}{abk \lambda_1^t} \; 
\end{align*}
decreases in $t$ with the assumptions in Lemma~\ref{lem:flow-evolve}. Putting everything together implies that $Q(C_t)$ increases asymptotically under the Ricci flow.
\end{proof}

\newpage
\subsection{Licenses}
We provide below the licenses of all packages and data sets used in this work.

\begin{table}[!ht]
\begin{tabular}{lll}
Model/Dataset & License  & Notes  \\ \hline
LRGB \cite{LRGB} & Custom  & See \href{https://github.com/vijaydwivedi75/lrgb}{here} for license \\
TUDataset \cite{TUDataset} & Open &   Open sourced \href{https://chrsmrrs.github.io/datasets/docs/home/}{here}  \\
Planetoid \cite{Planetoid} & MIT & See \href{https://github.com/kimiyoung/planetoid/tree/master}{here} for license \\ 
Pytorch Geometric \cite{pytorch-geom} & MIT & See \href{https://github.com/pyg-team/pytorch_geometric/blob/master/LICENSE}{here} for license \\
Pytorch \cite{paszke2017automatic} & 3-clause BSD & See \href{https://pytorch.org/FBGEMM/general/License.html}{here} for license \\
NetworkX \cite{hagberg2008exploring} & 3-clause BSD & See \href{https://networkx.org/documentation/stable/}{here} for license \\
GraphRicciCurvature \cite{ni2019community} &  Apache-2.0 license & 
See \href{https://github.com/saibalmars/GraphRicciCurvature}{here} for license \\
\hline
\end{tabular}
\end{table}

\end{document}